\documentclass{article} 
\usepackage[final]{colm2026_conference}

\usepackage{microtype}
\usepackage{hyperref}
\usepackage{url}
\usepackage{booktabs}
\usepackage{amsmath}
\usepackage{amsfonts}
\usepackage{amssymb}
\usepackage{bm}
\usepackage{cleveref}
\usepackage{algorithm}
\usepackage{algorithmic}
\usepackage[table]{xcolor}   
\usepackage{stmaryrd}
\usepackage{tikz}
\usepackage{wrapfig}
\usetikzlibrary{shadings}
\usetikzlibrary{arrows.meta}
\usepackage{twemojis}

\usepackage{booktabs}   
\usepackage{multirow}   
\usepackage{graphicx}   

\usepackage{amsthm}
\newtheorem{definition}{Definition}
\crefname{definition}{Definition}{Definitions}
\newtheorem{remark}{Remark}
\crefname{remark}{Remark}{Remarks}
\newtheorem{lemma}{Lemma}
\newtheorem{assumption}{Assumption}
\crefname{assumption}{Assumption}{Assumptions}
\newtheorem{corollary}{Corollary}
\usepackage{thmtools}
\declaretheorem[name=Proposition]{proposition}

\usepackage{lineno}

\definecolor{darkblue}{rgb}{0, 0, 0.5}
\hypersetup{colorlinks=true, citecolor=darkblue, linkcolor=darkblue, urlcolor=darkblue}

\title{A Tale of Two Temperatures: Simple, Efficient, and Diverse Sampling from Diffusion Language Models}

\author{%
  Theo X. Olausson$^{1}$\thanks{Equal contribution. Correspondence to \texttt{m.jazbec@uva.nl}, \texttt{theoxo@csail.mit.edu}. Code available at: \url{https://github.com/metodj/2-temps-dLLM}.} \quad Metod Jazbec$^{2*}$ \quad Xi Wang$^{3}$ \quad Armando Solar-Lezama$^{1}$ \\
  \textbf{Christian A. Naesseth}$^{2}$ \quad \textbf{Stephan Mandt}$^{4}$ \quad \textbf{Eric Nalisnick}$^{3}$ \\
  $^1$Massachusetts Institute of Technology \quad $^2$UvA-Bosch Delta Lab, University of Amsterdam \\ 
  $^3$Johns Hopkins University \quad $^4$University of California, Irvine
}

\newcommand{\mask}{\texttt{[MASK]}}
\newcommand{\indicator}[1]{\bm{1}\left( #1 \right)}
\newcommand{\tpos}{T_\text{pos}}

\begin{document}

\ifcolmsubmission
\linenumbers
\fi

\maketitle

\begin{abstract}
Much work has been done on designing fast and accurate sampling for diffusion language models (dLLMs).
However, these efforts have largely focused on the tradeoff between
speed and quality of \emph{individual} samples; how to additionally ensure
\emph{diversity} across samples remains less well understood.
In this work, we show that diversity can be increased by using
softened, tempered versions of familiar confidence-based remasking heuristics,
retaining their computational benefits and offering simple
implementations.
We motivate this approach by introducing an idealized formal model of \emph{fork tokens} and studying the impact of remasking on the expected entropy at the forks. Empirically, the proposed tempered heuristics close the exploration gap (pass@$k$) between existing confidence-based and autoregressive sampling, hence outperforming both when controlling for cost (pass@NFE). We further study how the increase in diversity translates to downstream post-training and test-time compute scaling.
Overall, our findings demonstrate that simple, efficient, and diverse
sampling from dLLMs is possible.
\end{abstract}

\section{Introduction}

Diffusion language models (dLLMs) have recently seen significant uptake,
offering native bidirectional context and the ability to generate
multiple tokens in parallel
(\citealt{nie2025large,ye2025dream,yang2025mmada,bethune2026designspacetrimodalmasked}; \emph{inter alia}).
Unlike in the autoregressive (AR) setting, dLLMs require a \emph{remasking} strategy that determines the order in which tokens are sampled. Because this choice significantly affects sample quality, much prior work has focused on optimizing the speed--quality tradeoff for individual samples
(\citealt{chang2022maskgit,nie2025large,kim2025train,ben-hamuAcceleratedSamplingMasked2025,jazbec2025learningunmaskingpoliciesdiffusion}; \emph{inter alia}).


While this line of work has helped make dLLMs more competitive with AR LLMs, efficiently obtaining a \emph{single} high-quality sample is not always sufficient: post-training, test-time compute scaling, and A/B testing all require \emph{diversity} across samples in order to be effective. Surprisingly, \citet{ni2026flexibility} found that confidence-based heuristics yielding high single-sample accuracy lead to worse diversity than left-to-right autoregressive generation. By delaying uncertain tokens, such heuristics avoid genuine branching points in the reasoning process. Autoregressive generation does not suffer from this problem, but comes at a significant cost as it forgoes parallel generation of multiple tokens altogether.

In this work, we argue that the diversity problem can be addressed with a simple fix, without abandoning parallel generation or adopting complex sampling schemes.
Our key observation is that it is the greedy nature of existing heuristics, not parallel generation in itself, that limits diversity.
By replacing them with softened, tempered approximations thereof---for example, sampling positions proportional to their tempered confidence instead of selecting them greedily based on the highest confidence---we demonstrate that diversity can be recovered while preserving speed and simplicity.

In summary, our contributions are as follows:
\begin{itemize}
    \item We introduce \emph{Tempered Low-Confidence} (TLC) and
    \emph{Tempered Confidence Thresholding} (TCT) remasking,
    stochastic relaxations of two widely used remasking
    heuristics \citep{nie2025large, wu2025fastdllmtrainingfreeaccelerationdiffusion} controlled by a \emph{position temperature} $\tpos$ (\Cref{sec:methods_temp}).
    This provides a knob complementary to the standard
    token-level temperature $T_\text{token}$:
    $T_\text{token}$ governs the entropy of what token is predicted at each
    position, while $\tpos$ governs how stochastic the order in which positions are committed
    to is.
    
    \item We give an operational definition of \emph{fork tokens}, high-entropy positions that prior work has empirically found to be important for effective post-training, and show that, under an idealized
    model, increasing $\tpos$ provably increases the expected entropy at these forks for TLC (\Cref{sec:methods_forks}).

    \item We empirically validate that TLC recovers pass@$k$ scaling
    comparable to AR rollouts across multiple benchmarks
    and models, confirming that tempering the remasking strategy is
    sufficient to restore diversity (\Cref{sec:exp_passk}, \Cref{fig:llada_tcl_all_ds}).
    TCT goes further by retaining the adaptive speed of confidence
    thresholding; when the cost of each rollout is accounted for,
    TCT often scales better than either untempered heuristics or AR
    generation (\Cref{sec:exp_passk}, \Cref{fig:llada_tct_all_ds}).

    \item We study how pass@$k$ improvements translate downstream:
    when used for GRPO rollout generation in dLLM post-training \citep{zhao2025d1} at matched computational budgets,
    TCT yields stronger policies than AR generation (\Cref{sec:exp_rl}).
    In the context of test-time compute, we find that while the increased diversity of TCT can
    overwhelm simple strategies such as majority voting, this gap is
    largely closed when using outcome reward models for answer
    selection (\Cref{sec:ttc}).
\end{itemize}

\section{Background}

We use the notation $[L]$ to represent the set $\{1, \ldots, L\}$. We denote a sequence of tokens by $\bm{x} = (x^1, \ldots, x^L) \in \mathcal{V}^L$, where $\mathcal{V}:= [V]$ is the vocabulary and $L$ is the sequence length.

\subsection{From masked diffusion models to diffusion language models}
Masked Diffusion Models (MDMs) are a form of absorbing discrete 
diffusion \citep{jascha2015diffusion,austin2021} in which the target 
(noise) distribution places all its mass on a single token, \mask,
and the forward process gradually corrupts each token towards the 
absorbing mask state by transitioning tokens independently through 
a Markov chain.
Formally, this can be characterized compactly as follows:
\begin{gather}
\bm{x}_0 \sim p_\text{data} \qquad \bm{x}_1 \sim 
\delta_\mask^{\otimes L} \\\label{eq:fwd}
\bm{x}_t \mid \bm{x}_0 \sim p_t(\bm{x}_t\mid \bm{x}_0)
\triangleq \prod_{k=1}^L \alpha(t)\indicator{x_t^k = \mask} 
+ (1-\alpha(t))\indicator{x_t^k = x_0^k}
\end{gather}
where $0 \leq t \leq 1$ denotes 
the time, $\alpha : t \rightarrow [0, 1]$ is a
monotonically 
non-decreasing function in $t$ with endpoints $\alpha(0) = 0, \alpha(1) = 1$, and $\delta^{\otimes L}_\mask$ is a product of $L$ Dirac measures placing all of their mass on $\mask$.
The generative (i.e., reverse) process can be modeled either by 
learning to predict the time-dependent reverse transition ratios 
\citep{campbell2021,mengConcreteScoreMatching2022,lou24sedd}, 
or by training a model $q_\theta$ to directly approximate the 
conditional distribution $p_{0\mid t}$.
The latter strategy has typically
been the choice of language-modeling practitioners,
as later work \citep{sahoo2024simple,shi2024simplified,ou2025}
showed that its evidence lower bound (ELBO) simplifies to:
\begin{equation}
\label{eq:training}
    -\mathcal L (\theta) \triangleq 
    \mathbb{E}_{t \sim U[0, 1], \bm{x}_0 \sim p_\text{data}, 
    \bm{x}_t \sim p_t(\cdot \mid \bm{x}_0)} \left[ 
    \frac{\alpha'(t)}{\alpha(t)} \sum_{k=1}^L \indicator{x_t^k = \mask}
    \log q_\theta^k(x_0^k \mid \bm{x}_t) \right] \ 
\end{equation}
where $q_\theta^k(\cdot| \bm{x}_t)$ is the model's predicted \emph{marginal} distribution over possible tokens at the $k$-th position.
This formulation eliminates explicit conditioning on $t$,
connecting MDM training to masked language modeling \citep{devlin-etal-2019-bert}
and any-order autoregressive models \citep{hoogeboom2022ardm}.
Combined with empirically validated scaling laws for MDMs \citep{nie2025scaling,bethune2026designspacetrimodalmasked}, 
this has enabled the open-source community to train multi-billion parameter MDMs on textual data, which we refer to as \emph{diffusion (large) language models} (dLLMs).

\subsection{Remasking strategies and sampling from dLLMs}
\label{sec:bg-remasking}

Early work on MDMs obtained samples by discretizing the reverse process and simulating the Markov chain \citep{campbell2021,austin2021,lou24sedd}.
As observed by \citet{zhengMaskedDiffusionModels2025}, this is potentially wasteful as each individual token transitions exactly once in \Cref{eq:fwd}.
They instead proposed a \emph{first-hit} sampling scheme, which when the time-agnostic objective of \Cref{eq:training} is used
simplifies to
what \citet{nie2025large} call \emph{random remasking}. Concretely, denoting the current generation as $\bm x_t \in \mathcal{V}^L$ with still-masked positions $\mathcal M_t := \{k \in  [L] \mid x_t^k = \mask\}$, the next generation $\bm x_{t'}$ is obtained via
\begin{align*}
    x_{t'}^k := \begin{cases}
        x \sim q_{\theta}^k(\cdot \mid \bm x_t \: ; T_{\text{token}}), & \text{if } k \in \mathcal U_t, \\
        x_{t}^k\;, & \text{otherwise.}
    \end{cases}
\end{align*}
where the \emph{unmasking set} $\mathcal{U}_t$ of $K \geq 1$ positions is sampled uniformly at random (without replacement) from $\mathcal M_t$.
As in the autoregressive setting, $T_\text{token}$ tempers the final softmax of $q_\theta^k$: intuitively, higher temperatures yield more diverse (but possibly lower quality) generations.
This process continues until reaching a time where $\mathcal M_t  = \emptyset$.

Later work revealed that in practice, test-time performance can 
often be improved by using \emph{heuristic} remasking strategies 
that select $\mathcal{U}_t$ based on distributional information 
about the model's predictions. Two particularly popular choices use the model's per-position confidence 
$c_{t}^k \triangleq \max_{v} q_\theta^k(v \mid \bm x_t)$:\footnote{For non-greedy sampling we define the confidence as $c_{t}^k \triangleq q_\theta^k(x \mid \bm x_t), \: x \sim q_{\theta}^k(\cdot \mid \bm x_t \: ; T_{\text{token}})$.}
\emph{low-confidence} 
(LC; \citealt{chang2022maskgit,nie2025large}) remasking unmasks the $K$ most 
confident positions, while \emph{confidence thresholding} 
(CT; \citealt{wu2025fastdllmtrainingfreeaccelerationdiffusion}) unmasks 
all positions exceeding a fixed threshold 
$\lambda \in [0, 1]$:
\begin{align}
\label{eq:lc}
\text{LC:} \quad \mathcal U_{t}^{K} &:= 
\underset{I 
\subseteq \mathcal M_t,\ |I| = K}{\arg\max}\ 
\sum_{k \in I} c_{t}^k \\
\label{eq:ct}
\text{CT:} \quad \mathcal U_t^{\lambda} &:= \{ k \in 
\mathcal M_t \mid c_t^k > \lambda \}
\end{align}
The key distinction is that low-confidence remasking unmasks a fixed number of tokens per step ($K$), while confidence thresholding is \emph{adaptive}: $|\mathcal{U}^\lambda_t|$ varies with the model's confidence, allowing sampling to proceed quickly when the model is certain and slowly when it is not.

\subsection{Sample diversity and the role of remasking}
\label{sec:bg-diversity}

The remasking strategies discussed above are typically compared in
terms of accuracy at a given computational
budget under greedy, $T_{\text{token}}=0$, decoding
\citep{nie2025large,wu2025fastdllmtrainingfreeaccelerationdiffusion,ben-hamuAcceleratedSamplingMasked2025}.
However, many practical applications require diverse \emph{sets}
of samples; in this case the more important metric becomes pass@$k$,
the probability that at least one of $k$ independent
samples solves a given problem \citep{kulal2019spoc,chen2021evaluatinglargelanguagemodels}.
\citet{ni2026flexibility} investigated pass@$k$ scaling for dLLMs
and found that low-confidence remasking, despite its strong
greedy performance, exhibits notably flatter pass@$k$ curves
than autoregressive (left-to-right) decoding.
They attribute this to a phenomenon they dub \emph{entropy degradation},
where 
low-confidence systematically defers high-entropy
branching positions until surrounding context has already been established,
collapsing the entropy that would otherwise enable
qualitatively diverse samples.
Autoregressive decoding, \citet{ni2026flexibility} argue, sidesteps this by resolving tokens in a
fixed positional order agnostic to confidence, forcing the model to
commit to branching points as they arise.

\section{Methods}
\label{sec:methods}

\begin{figure}
    \centering
    \includegraphics[width=\linewidth]{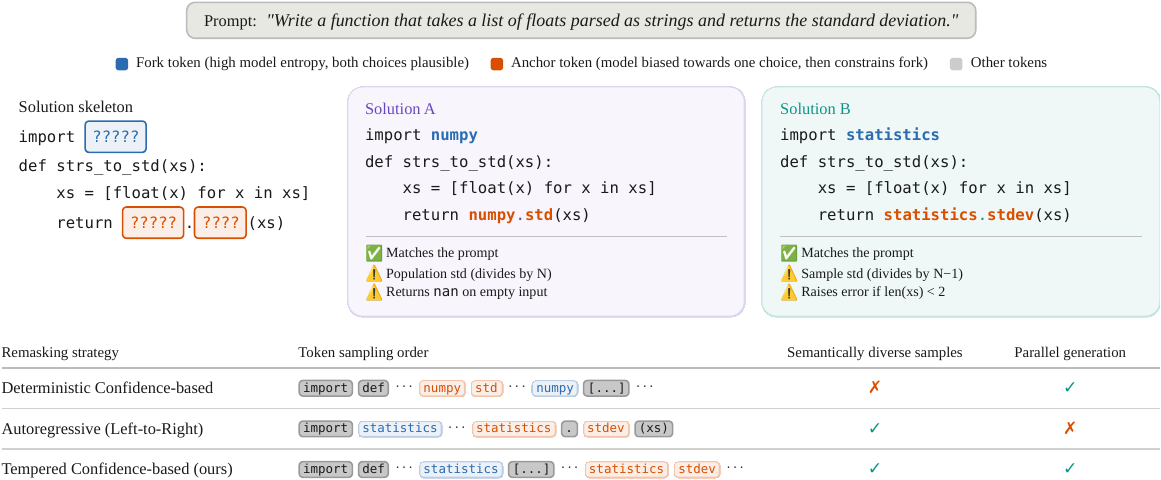}
    \caption{An example illustrating our model of entropy degradation. The \textcolor[HTML]{2B6CB0}{\textbf{fork token}} (\Cref{def:fork-token})
    has high marginal uncertainty under the dLLM,
    indicating that both the \texttt{numpy} and \texttt{statistics} solutions are plausible. However, the dLLM is simultaneously overly confident in its prediction at the \textcolor[HTML]{D94F00}{\textbf{anchor tokens}}
    in the function body, biasing low-confidence remasking towards one solution as it reveals those tokens first.
    Tempered remasking allows the fork to still be revealed first, increasing semantic diversity without sacrificing parallelism.
    Note that while both paths lead to generations that match the prompt (\twemoji{2705}), they have subtly different semantics (\twemoji{warning}); sampling both is important to achieve good coverage.}
    \label{fig:overview}
\end{figure}

We now introduce our approach for generating diverse and efficient samples with dLLMs.
We first study \citet{ni2026flexibility}'s empirical findings through the lens of a formal model of \emph{fork tokens} and their relationship to semantic sample diversity (\Cref{sec:methods_forks}).
Motivated by our observations, we then propose
to temper confidence-based heuristics via a new position temperature $T_{\text{pos}}$, and discuss two concrete instantiations of this strategy (\Cref{sec:methods_temp}).
\Cref{fig:overview} shows an overview of our modeling framework and methodology.

\subsection{Fork tokens and the anchor-fork degeneracy}
\label{sec:methods_forks}

\citet{ni2026flexibility}'s empirical study of entropy degradation is grounded
in the observational definition of \emph{forking tokens} given by
prior work (see \Cref{app:related_work}).
These works identify forks as positions in the
sequence that have exceptionally high entropy,
and observe that maintaining diversity at these positions translates to effective post-training (for autoregressive models).

To allow us to reason more precisely about this phenomenon,
we develop a formal model in Appendix~\ref{appendix:proofs} (see Appendix~\ref{app:assump1_empirical} for empirical validation).
Key to our analysis is an \emph{operational} definition of fork tokens as positions in the sequence whose value
sharply bounds the set of possible semantic outcomes under the data distribution (\Cref{def:fork-token}), allowing us
to formally connect their uncertainty to the semantic sample diversity (Equation~\ref{eq:sandwich}).
To connect this machinery to \citet{ni2026flexibility}'s observations of
the \emph{sampling distribution} induced by pairing $q_\theta$ with low-confidence or autoregressive remasking,
we build on their intuition that the root cause of
entropy degradation is that contextual tokens surrounding the fork
tend to dominate it in confidence, and that revealing them collapses the fork's entropy before it is ever sampled.
We capture this intuition in what we call the \emph{anchor-fork degeneracy} model
(Assumption~\ref{assumption:anchor-fork}),
where the sampling distribution's uncertainty about the fork token
is determined by the set of high-confidence \emph{anchor tokens} that have been revealed so far. This phenomenon is visualized in the upper half of \Cref{fig:overview}.

This simple model recovers all three of
\citet{ni2026flexibility}'s empirical observations:
(i)~LC yields minimal average fork entropy;
(ii)~adjusting $T_\text{token}$ does not help with preserving entropy at forks, since it does not
change the unmasking \emph{order}
(Proposition~\ref{prop:token-temp});
and (iii)~left-to-right generation yields strictly higher average fork
entropy than LC whenever at least one anchor follows the fork in
positional order (Proposition~\ref{prop:ar-diversity}).
Additionally, however, it also yields a new prediction:
autoregressive generation is not special, and \emph{any} strategy
that increases the probability of resolving a fork before its
anchors will retain higher fork entropy than LC.
This suggests that semantic diversity can be improved
without sacrificing the efficiency of parallel generation.

\subsection{Tempered remasking heuristics}
\label{sec:methods_temp}

The formal analysis of Appendix~\ref{appendix:proofs} suggests a simple fix for confidence-based heuristics: rather than
deterministically selecting positions to unmask,
\emph{sample} them in proportion to a softened distribution determined by their confidences.
This distribution can then be controlled through
a \emph{position temperature} $\tpos$ that
provides a complementary knob to $T_\text{token}$:
$T_\text{token}$ governs how greedily we determine \emph{what} is sampled at each position,
while $\tpos$ governs how greedily we determine the \emph{order} in which to do so.

We now describe concrete instantiations corresponding to each heuristic from \Cref{sec:bg-remasking}.

\textbf{Tempered low-confidence (TLC) remasking (\Cref{alg:tlc}).}
Rather than deterministically selecting $\mathcal U_t^K$ as in
\Cref{eq:lc},
TLC samples $K$ positions (without replacement) from a tempered
distribution induced by the confidences:
\begin{align*}
\mathcal U_t^{K, \tpos}
\stackrel{\text{w/o repl.}}{\sim} \text{Cat}(\tilde{\bm c}_t),
\qquad
\tilde c_t^k \propto (c_t^k)^{1/\tpos} \cdot \indicator{k \in \mathcal M_t}
\end{align*}
where $\tpos > 0$ is a free parameter.
As $\tpos \downarrow 0$, TLC recovers deterministic
low-confidence remasking;
as $\tpos \uparrow \infty$, it approaches uniform (random)
remasking.

In Appendix~\ref{appendix:proofs} we prove that TLC resolves the
anchor-fork degeneracy: under the idealized model, sufficiently
large increases of $\tpos$ provably increase the expected
fork-token entropy
(Proposition~\ref{prop:fork-entropy}); combined with an assumption
that $q_\theta$ captures the data distribution well enough to preserve the fork structure (\Cref{assumption:model-quality}), 
this yields a direct increase in semantic diversity, and thus intuitively pass@$k$, whenever the fork-entropy gain is large enough (Corollary~\ref{cor:semantic-entropy}).%
\footnote{In practice, excessively increasing $\tpos$ risks revealing positions with less context than is needed to yield
coherent generations (same as random unmasking), so for smaller values of $k$ the optimal $\tpos$ needs to be chosen to balance the
diversity gain against this loss of quality
(see \Cref{fig:t_pos_ablate}).
}

While TLC offers a mechanism to recover sample diversity,
it does not yield the \emph{adaptive} speed-ups that make confidence-based heuristics particularly attractive.
This motivates applying the same tempering idea to confidence thresholding.
 
\textbf{Tempered confidence thresholding (TCT) remasking (\Cref{alg:tct}).}
TCT replaces the hard threshold in \Cref{eq:ct} with stochastic inclusion:
\begin{align*}
\mathcal U_t^{\lambda, \tpos}
= \{k \in \mathcal M_t \mid b_t^k = 1\},
\qquad
b_t^k \sim \text{Ber}\!\left(
\sigma\!\left( (c_t^k - \lambda) / \tpos \right)
\right)
\end{align*}
where $\sigma(x) = 1/(1 + e^{-x})$ is the sigmoid function.
This recovers the hard threshold at $\lambda$ as $\tpos \downarrow 0$,
while yielding increasingly stochastic (and $\lambda$-agnostic) behavior as $\tpos$ increases.
Intuitively, at moderate temperatures, $\lambda$ controls the speed of generation while $\tpos$ controls the degree of diversity:
under hard thresholding, a fork token whose confidence falls below $\lambda$ is deterministically deferred;
under TCT, the sigmoid gives it nonzero probability of being unmasked at every step, breaking the systematic deferral.
TCT thus retains the adaptive efficiency of confidence thresholding while introducing the same diversity benefits as TLC.

\section{Experiments}
\label{sec:experiments}

We begin our experiments by verifying that our proposed tempered samplers can close the gap to autoregressive sampling in terms of pass@$k$ and also outperform it when the cost of generation is taken into account (\Cref{sec:exp_passk}). Next, we investigate whether the improvements in diversity translate to improved downstream performance for test-time compute (\Cref{sec:ttc}). Finally, we examine the implications of tempering when generating rollouts during RL post-training of dLLMs (\Cref{sec:exp_rl}).

\subsection{Tempered heuristics recover pass@$k$ scaling}
\label{sec:exp_passk}

We start by investigating the core hypothesis that tempering the heuristics can recover pass@$k$ scaling comparable to autoregressive sampling \citep{ni2026flexibility}.

\textbf{Experimental setting.}\footnote{Our code is publicly available at: \url{https://github.com/metodj/2-temps-dLLM}.}
We evaluate our proposed remasking strategies on two open-source dLLMs: LLaDA-8B-Instruct \citep{nie2025large} and Dream-v0-Instruct-7B \citep{ye2025dream}. We evaluate on four standard benchmarks: GSM8K \citep{cobbe2021gsm8k}, MATH-500 \citep{hendrycksmath2021}, HumanEval \citep{chen2021evaluatinglargelanguagemodels}, and MBPP \citep{austin2021programsynthesislargelanguage}.
For GSM8K we evaluate on a subset of $N_{\text{test}}=300$ test samples to reduce computational cost. Following prior work \citep{zhao2025d1}, we use $T_\text{token} = 0.8$ as well as semi-AR/block-wise decoding with blocks of 32 tokens \citep{arriola2025block,nie2025large}, at a sequence length of $L = 256$. 
Due to cost, and to avoid overfitting the position temperature to any one particular setting, we fix $\tpos = 1$ for TLC and $\tpos = 0.1$ for TCT.

We report pass@$k$, the standard measure of whether at least one of $k$ samples solves a given problem \citep{kulal2019spoc,chen2021evaluatinglargelanguagemodels}.
When evaluating the adaptive remasking functions,
we also report \emph{pass@NFE}---the pass rate as a function of the cumulative NFEs required.
This metric was also used in recent work by \citet{shen2026improvingdiffusionlanguagemodel},
and avoids pass@$k$'s bias towards methods that use more compute per sample \citep{olausson2024repair}.
%
\begin{figure*}[t]
    \centering
    \includegraphics[width=\textwidth]{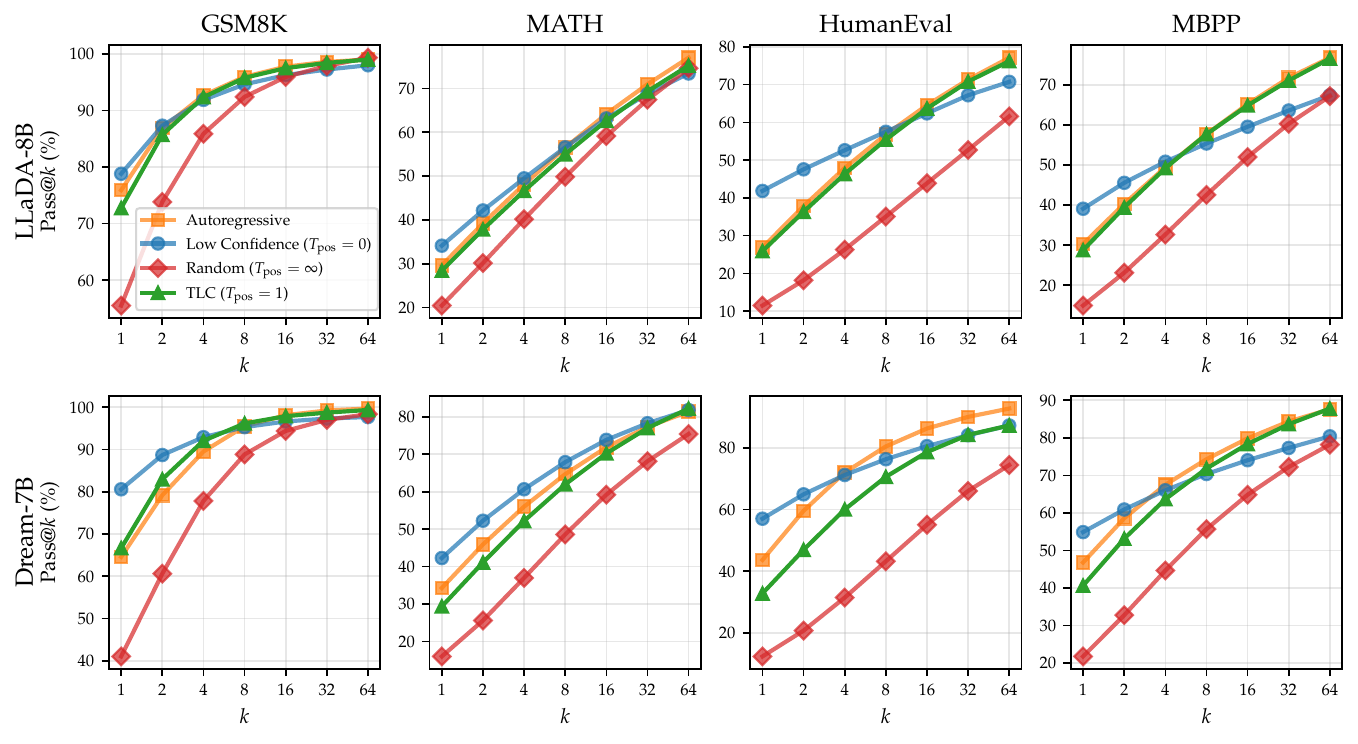}
    \caption{Pass@$k$ for TLC ($\tpos = 1$) on LLaDA-8B-Instruct and Dream-7B-Instruct across four benchmarks. TLC closely tracks the AR baseline, recovering almost all of the diversity lost by low-confidence remasking.}
    \vspace{-15pt}
    \label{fig:llada_tcl_all_ds}
\end{figure*}

\textbf{TLC recovers diversity at high sample counts.}
\Cref{fig:llada_tcl_all_ds} shows pass@$k$ for TLC across all four benchmarks, as well as for autoregressive, low-confidence and random baselines.
We first note that our results echo the findings of \citet{ni2026flexibility},
as the low-confidence baseline underperforms autoregressive generation for both models, on all four datasets.
In contrast, despite the lack of task- or model-specific tuning of $\tpos$,
TLC closely tracks the AR baseline in all settings other than for Dream on HumanEval.
These findings confirm that tempering the remasking heuristic can restore rollout diversity. Moreover, while the random baseline catches up to TLC and AR on math datasets, it exhibits a large gap on coding datasets (HumanEval, MBPP), indicating that too much randomness in sampling unmasking positions (i.e., $T_{\text{pos}}=\infty$) can hurt performance. 

In \Cref{fig:t_token_ablate}, we additionally study how robust these findings are across different token temperatures, finding TLC to be the least dependent on the exact choice of $T_{\text{token}}$. Concretely, performance under AR sampling degrades severely when using $T_{\text{token}} > 1$, while low-confidence experiences a stronger drop in performance for $T_{\text{token}} < 1$ compared to the proposed TLC. Importantly, TLC exhibits larger performance improvements when increasing $k$ compared to low-confidence sampling across all considered $T_{\text{token}}$, confirming the benefits of additional randomness (via $\tpos > 0$) on sampling diversity.

In \Cref{fig:tlc_mbpp_sdar_llada2}, we further investigate whether the amount of diffusion training received by the underlying dLLM influences the effectiveness of TLC. To this end, we repeat our TLC pass@$k$ experiments on LLaDA-2-mini \citep{bie2025llada2} and SDAR \citep{cheng2026sdar}, both of which were initialized from autoregressive LLMs and subsequently converted into dLLMs using approximately 200B and 50B training tokens, respectively. In contrast, LLaDA was trained as a diffusion model from scratch on 2.3T tokens. As shown in \Cref{fig:tlc_mbpp_sdar_llada2}, unlike on LLaDA, TLC does not fully close the gap to autoregressive sampling on either LLaDA-2-mini or SDAR. We hypothesize that this is because TLC shifts the sampling behavior toward random sampling, such that its effectiveness at large $k$ depends on random sampling itself retaining reasonable accuracy. Indeed, random sampling appears to perform better for models exposed to more extensive diffusion training: on LLaDA, it fully catches up with low-confidence sampling by $k=64$, whereas on SDAR it remains substantially behind. Taken together, these results suggest that TLC is most effective when the underlying model has received sufficient diffusion training. 

\begin{figure}[t]
    \centering
    \includegraphics[width=\textwidth]{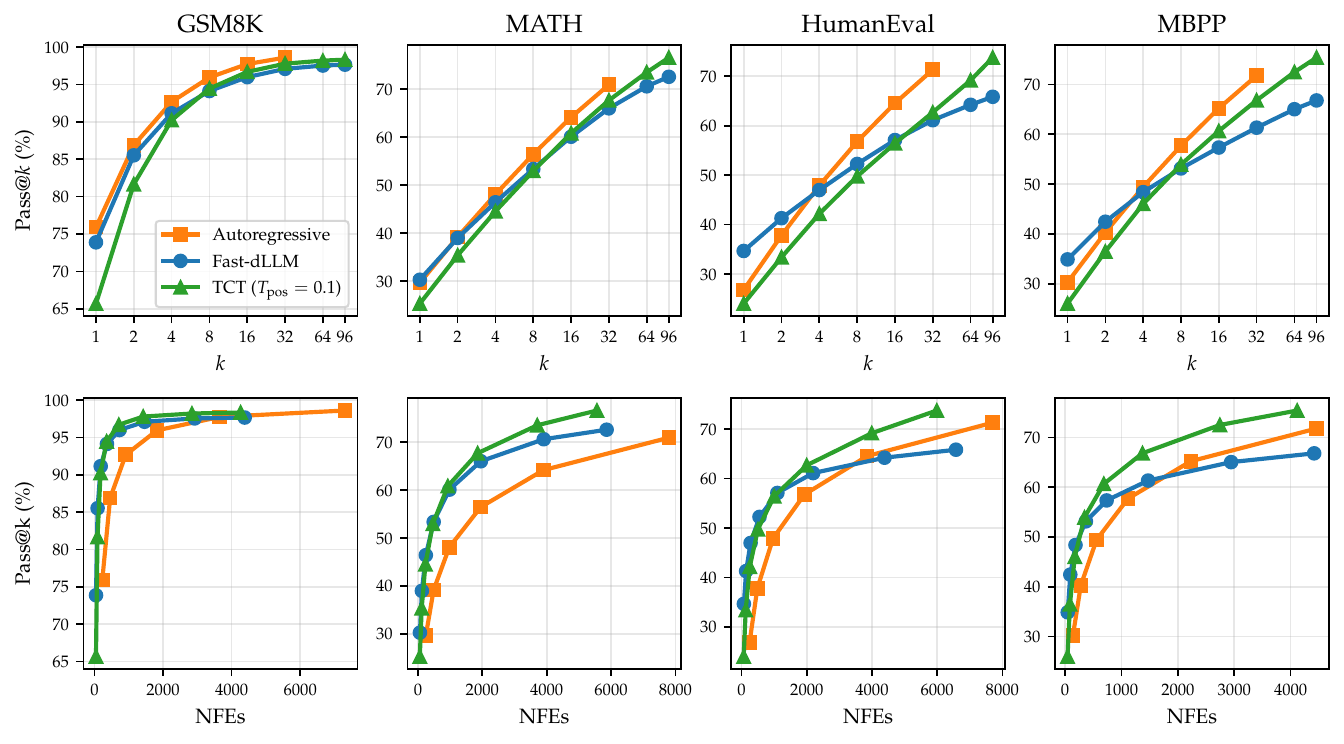}
    \caption{Pass@$k$ (top) and pass@NFE (bottom) for TCT on LLaDA-8B-Instruct. Both rows plot the same data; only the metric changes (pass@$k$ vs pass@NFE). TCT slightly underperforms AR in pass@$k$, but substantially outperforms it when the cost of each sample is taken into account. TCT also outperforms deterministic confidence-thresholding (Fast-dLLM; \citealt{wu2025fastdllmtrainingfreeaccelerationdiffusion}) thanks to its additional source of randomness via $T_\text{pos}$.}
    \vspace{-15pt}
    \label{fig:llada_tct_all_ds}
\end{figure}

\textbf{TCT: lower pass@$k$, but higher pass@NFE.}
We depict TCT sampling results for LLaDA in \Cref{fig:llada_tct_all_ds} (see \Cref{fig:dream_tct_all_ds} for Dream and \Cref{fig:llada2_tct_all_ds} for LLaDA-2-mini \citep{bie2025llada2} results). Looking only at pass@$k$ (top row), TCT ($\tpos = 0.1$, $\lambda = 0.6$) slightly underperforms the AR baseline on all benchmarks, although it still beats out the deterministic confidence-thresholding ($\lambda = 0.6$; \citealt{wu2025fastdllmtrainingfreeaccelerationdiffusion}). However, when cost is taken into account (bottom row), TCT starts to shine, substantially outperforming both baselines. As a concrete example, on the MATH dataset, TCT requires around $4000$ NFEs to achieve performance above $70\%$: a $2\text{x}$ speed-up compared to AR which requires around $8000$ NFEs.\footnote{When measuring the NFEs for AR sampling, we count the number of tokens generated before $\text{\textlangle eos\textrangle}$. For TCT/CT, we count NFEs until all $L$ positions are unmasked.} Notably, deterministic confidence-thresholding alone is not sufficient to outperform autoregressive rollouts at higher NFE values, as it underperforms AR on HumanEval and MBPP; only TCT consistently comes out on top, highlighting the importance of tempering confidence-based heuristics to preserve diversity.

\subsection{TCT for test-time compute}
\label{sec:ttc}

We next turn our attention to test-time compute to check whether the observed improved diversity of our proposed TCT sampler translates to  better downstream performance.

\textbf{Experimental setting.} We focus on evaluating LLaDA-8B-Instruct on GSM8K (same subset of $N_{\text{test}}=300$ samples) and MATH-500.
We study these datasets as they require the use of heuristic answer selection strategies, unlike in coding where pass@$k$ already directly captures scaling with respect to a fixed test suite.
We consider two widely used such strategies: (i) \emph{self-consistency}, where the most common answer is selected \citep{wang2023selfconsistencyimproveschainthought}, and (ii) \emph{ORM}, where an (outcome) reward model is used to score each generation, and the one with the highest score is returned. For the ORM, we use AceMath-7B \citep{acemath2024}.

\textbf{Results.} We report Best@NFE results in \Cref{fig:ttc}. Focusing first on the self-consistency results, we observe that while TCT outperforms the AR baseline at lower NFE counts, it plateaus sooner and thus achieves a lower top performance. As depicted in \Cref{fig:entropy-vs-k}, we attribute this to TCT producing less peaked, higher-entropy predictive distributions when aggregating over $k$ samples (see \Cref{fig:ans_dist_gsm8k} for concrete examples of such predictive distributions). However, when the ORM is used to select the sample, the performance trends from pass@NFE in the previous section are largely recovered, and both TCT and confidence-thresholding outperform the AR baseline in terms of Best@NFE.

\subsection{TCT for GRPO rollouts}
\label{sec:exp_rl}

\begin{figure}
    \vspace{-15pt}
    \centering
    \includegraphics[width=\linewidth]{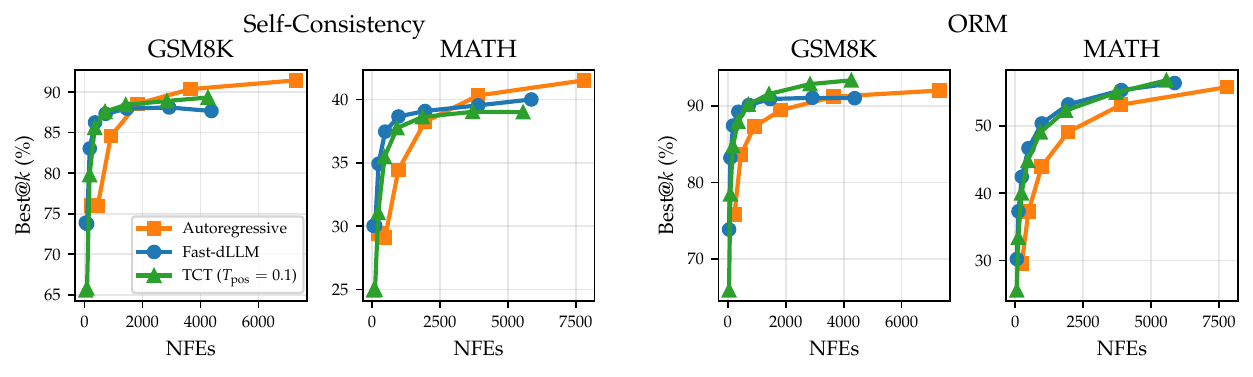}
    \caption{Test-time scaling for LLaDA-8B-Instruct in terms of Best@NFE when using either self-consistency or an ORM for final answer selection. TCT performs worse compared to AR at higher NFEs when using self-consistency; however, when using an ORM, TCT improves the Pareto frontier compared to autoregressive rollouts. See \Cref{fig:ttc_bestk_bestnfe} for Best@$k$ results.}
    \vspace{-15pt}
    \label{fig:ttc}
\end{figure}

Recall that our interest in pass@$k$ (and pass@NFE) stems from it being a useful indicator of the performance gains that can be obtained through policy optimization post-training \citep{yue2025limit-of-rlvr}.
In this section, we thus investigate whether the pass@NFE improvements of TCT translate to better post-training results, when controlling for training cost.

\textbf{Experimental setting.}
We focus on post-training LLaDA-8B-Instruct \citep{nie2025large} on mathematical datasets (GSM8K, MATH-500). We adopt the d1 reinforcement learning framework \citep{zhao2025d1}, as it is widely used for post-training dLLMs and provides efficient (albeit biased) policy likelihood estimation requiring only a single NFE. For generating GRPO rollouts, we use either TCT (with $\tpos = 0.1$, $\lambda = 0.6$), deterministic confidence-thresholding (with $\lambda = 0.6$), or autoregressive rollouts. Since the considered samplers differ in generation speed, we ensure a fair comparison by training for a fixed duration rather than a fixed number of training steps or epochs. Specifically, all models are trained for 72 hours on a single node with 4×A100 GPUs. After training, we evaluate the resulting RL-trained models under greedy decoding ($T_{\text{token}} = 0$) with confidence-thresholding, varying the threshold $\lambda \in \{0.6, 0.8, 1.0\}$ to obtain accuracy-NFEs Pareto frontiers for each trained policy.


\textbf{Results.} In \Cref{fig:rl}, we show training rewards (left), the standard deviation of rewards within a group during training (middle), and evaluation results (right). In the training plots, we indicate the number of completed steps (within a fixed compute budget) using vertical dashed lines.
We observe that TCT completes the most GRPO training steps. For example, on GSM8K, TCT completes around $44$K steps compared to only around $21$K steps for AR. This is a direct consequence of the adaptive nature of the TCT and Fast-dLLM samplers, which generate multiple tokens in-parallel, hence leading to much faster generation of GRPO rollouts compared to one-token-at-a-time, left-to-right autoregressive sampling. Next, we observe that TCT exhibits lower training rewards compared to both AR and confidence-thresholding (Fast-dLLM). Note, however, that training rewards are not directly comparable, as different sampling strategies are used for rollout generation. Hence, the lower training reward of TCT can be directly attributed to its slightly lower pass@1 rates (\Cref{fig:llada_tct_all_ds}). More important than the average reward for RL is the diversity of rewards within a group of rollouts used to compute GRPO advantages. In the extreme case where all samples receive the same reward, all advantages are equal to zero, and there is no learning signal for RL to exploit. Encouragingly, we find that TCT helps preserve high rollout diversity, as it exhibits the highest standard deviation of rewards throughout training. 

Lastly, at test time, we observe that the policies obtained under TCT or Fast-dLLM sampling yield better evaluation performance compared to AR-trained policies. This complements recent findings in the literature \citep{ni2026flexibility} by demonstrating that when training cost is taken into account, there is no clear advantage to using autoregressive rollouts for GRPO, at least when using the efficient likelihood estimator from \citet{zhao2025d1}. We leave for future work the investigation of whether our findings also hold when using unbiased, yet more computationally expensive, likelihood estimators \citep{turok2026duelexactlikelihoodmasked, ni2026flexibility}.

\begin{figure}
    \centering
    \vspace{-15pt}
    \includegraphics[width=1\linewidth]{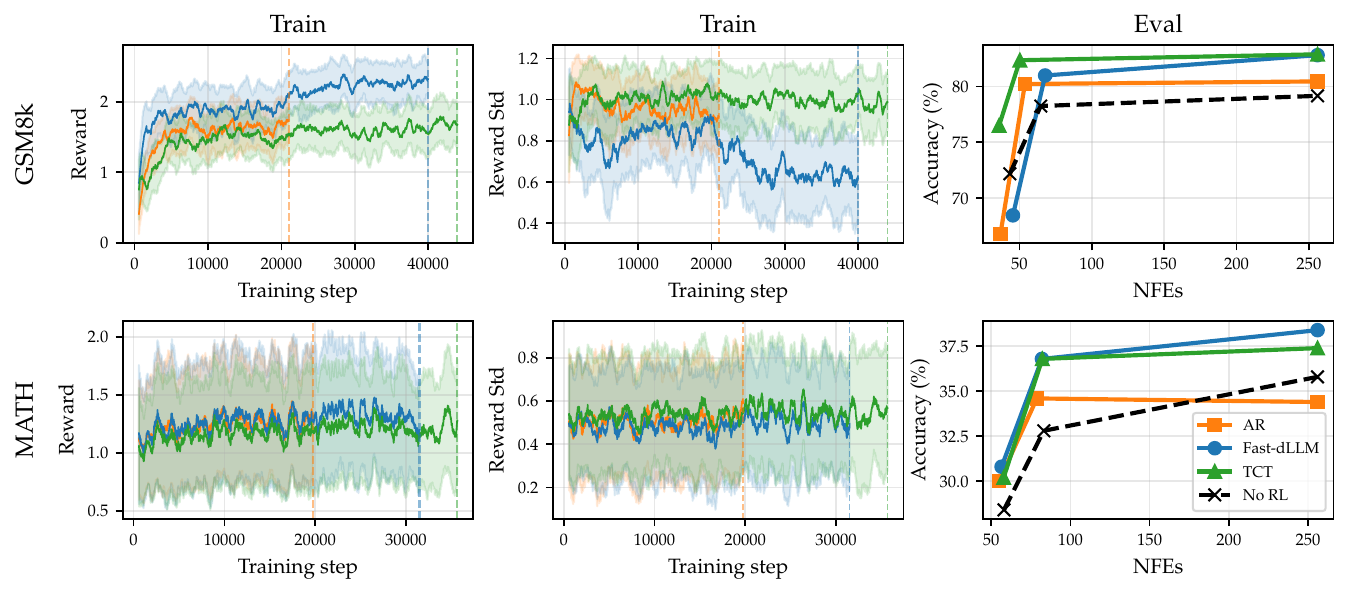}
    \vspace{-10pt}
    \caption{GRPO post-training results for LLaDA-8B-Instruct on GSM8K (top) and MATH (bottom). \emph{Left}: Group mean training reward. \emph{Middle}: Group standard deviation of the reward. \emph{Right}: Evaluation accuracy under greedy decoding with confidence thresholding at varying $\lambda$. All models are trained for a fixed wall-clock budget of 72 hours; vertical dashed lines indicate the number of completed training steps for each sampler. TCT completes around twice as many training steps compared to AR due to its adaptive parallelism, maintains the highest reward diversity throughout training due to $T_{\text{pos}} > 0$, and yields evaluation performance comparable to or better than both AR and Fast-dLLM baselines.}
    \vspace{-10pt}
    \label{fig:rl}
\end{figure}

\section{Related work}
\label{sec:related-work}

While sample diversity is well-studied in the autoregressive setting, it has received comparatively little attention in the context of dLLMs. \citet{ni2026flexibility} and \citet{shen2026improvingdiffusionlanguagemodel} both study how the generation order impacts the reachable solution space, finding that confidence-based heuristics collapse diversity; however, neither proposes a sampling strategy that recovers diversity while retaining the efficiency of parallel generation. Concurrent to our work, \citet{wu2026time} propose TAPS, which injects time-dependent noise into the conditioning signal to encourage semantic branching in early denoising steps, and \citet{lamont2026free} repel samples in feature space during generation to penalize within-batch redundancy. Both are training-free and effective, but operate on different axes than our approach where we modify the remasking order itself. The three approaches are thus complementary. We discuss further related work in Appendix~\ref{app:related_work}.



\section{Conclusion}

We have introduced tempered remasking heuristics for dLLMs,
showing that simple, stochastic relaxations controlled by a single temperature $\tpos$ can recover the rollout diversity needed for effective post-training without sacrificing the advantages of parallel generation.

\textbf{Limitations and future work.}
Our work does not provide a mechanism for how to tune $T_{\text{pos}}$ to optimally balance sample quality and diversity, and except for \Cref{fig:t_pos_ablate} we do not empirically analyze our method's sensitivity to the parameter.
Moreover, we observe that although TCT strongly outperforms Fast-dLLM sampling \citep{wu2025fastdllmtrainingfreeaccelerationdiffusion} in our pass@NFE experiments (\Cref{fig:llada_tct_all_ds}), we find that the two often perform comparably on downstream tasks such as test-time compute (\Cref{fig:ttc}) and RL post-training (\Cref{fig:rl}).
Given that prior work has shown a connection between pass@$k$ and success in these settings for autoregressive models \citep{yue2025limit-of-rlvr},
it would be valuable for future work to investigate why this is not reflected in d1-style \citep{zhao2025d1} GRPO for dLLMs. Additionally, preliminary experiments on a larger set of models (\Cref{fig:tlc_mbpp_sdar_llada2}) suggest that the effectiveness of TLC may depend on the amount and nature of diffusion training received by the underlying model, potentially limiting its benefits for dLLMs obtained through comparatively short diffusion continued pretraining from autoregressive initializations.
Since such conversions are often favored by practitioners seeking to reduce training cost, future work should explore how to make tempered samplers more effective for such models.
Finally, while we take measures to empirically validate the modeling used in our formal analysis (Appendix~\ref{app:assump1_empirical}),
our proofs rely on the idealized assumptions that fork-token entropy degrades linearly as anchors are revealed (Assumption~\ref{assumption:anchor-fork}) and that $q_{\theta}$ preserves the data distribution's fork structure (Assumption~\ref{assumption:model-quality}), neither of which would hold perfectly in practice.
We also only theoretically study TLC at $K = 1$, leaving a theoretical gap to parallel generation unlike other analyses such as those by \citet{wu2025fastdllmtrainingfreeaccelerationdiffusion,ben-hamuAcceleratedSamplingMasked2025}.


\section*{Acknowledgments}
 A. S-L. and TX. O. were supported by NSF grant number CCF-1918839.
 This project was generously supported by the Bosch Center for Artificial Intelligence.

\section*{Use of AI assistance}
We acknowledge the use of AI tools in the preparation of this manuscript. Specifically, we utilized
language models (LMs) to assist with grammar and style editing, preparation of
figures, general coding assistance, drafting and checking proofs, and other tasks. We have reviewed and edited the content to
ensure accuracy and clarity, and we take full responsibility for the final version of the manuscript.

\bibliography{main}
\bibliographystyle{colm2026_conference}

\newpage
\appendix
\section{Extended related work}
\label{app:related_work}
\textbf{Post-training dLLMs.}
Most work on adapting policy optimization to dLLMs has focused on tackling the intractability of the dLLM's likelihood 
(\citealt{zhao2025d1,rojas2025improving,wang2025revolutionizing,wang2025spg,wang2026d2improvedtechniquestraining,turok2026duelexactlikelihoodmasked}; \emph{inter alia}), while the impact of the remasking strategy on policy exploration has received less attention thus far.
Most closely related to our work, \citet{ni2026flexibility} observed a tension between efficiency and diversity
in deterministic confidence-based samplers, and
instead proposed using AR sampling to encourage exploration in dLLMs.
Crystallizing their observation into a formal model directly led to the alternative solution of tempered samplers we propose in this work, which improve sample diversity without sacrificing generation speed. 

\textbf{Remasking strategies for dLLMs.}
Building on the simplicity of the first-hit sampler proposed by 
\citet{zhengMaskedDiffusionModels2025}, a variety of heuristic 
remasking strategies have been proposed.
Fixed-budget methods choose a pre-determined number of positions after ranking candidates by confidence 
\citep{chang2022maskgit,nie2025large}, prediction margin 
\citep{kim2025train}, or KL stability across steps 
\citep{kim2025klass},
while adaptive methods unmask variable-size sets via confidence 
thresholds \citep{wu2025fastdllmtrainingfreeaccelerationdiffusion} 
or entropy bounds
\citep{ben-hamuAcceleratedSamplingMasked2025}.
Orthogonally, some works have sought to \emph{learn} remasking 
strategies 
\citep{huang2025reinforcing,jazbec2025learningunmaskingpoliciesdiffusion,
chen2025dultraultrafastdiffusionlanguage,zhao2026diffpotrainingdiffusionllms}.
Common to all of the above is their focus on the accuracy--speed 
tradeoff for individual samples under greedy decoding, in contrast to our focus on diversity.

\textbf{The role of sample diversity in language modeling.}
Sample diversity has been found to be a critical component in multiple stages of the LLM 
scaling pipeline: \citet{yue2025limit-of-rlvr} showed that RL accuracy 
gains are bounded by base-model pass@$k$, and
test-time scaling requires meaningfully distinct candidates for 
strategies such as self-consistency 
\citep{wang2023selfconsistencyimproveschainthought} and 
Best-of-N \citep{snell2025scaling}.
In autoregressive models, maintaining diversity during post-training has required 
indirect interventions such as entropy regularization 
\citep{yu2025dapoopensourcellmreinforcement,
cui2025entropymechanismreinforcementlearning,
lipkin2026entropypreserving}, entropy-based advantage shaping \citep{cheng2026reasoning}, or selective regularization of low-probability exploratory tokens \citep{huang2025low};
we show that in the case of dLLMs, the flexible generation order itself provides an alternative lever.

\textbf{Fork tokens.} \citet{bigelow2024forking} studied forking tokens in neural text generation, showing that resampling a small set of high-impact tokens can induce qualitatively different continuations. More recently, \citet{wang2025beyond} popularized the term \emph{forking tokens} to describe high-entropy tokens that steer reasoning, and demonstrated their crucial role for effective RLVR of autoregressive LLMs. \citet{ni2026flexibility} investigated forks in dLLMs and empirically showed that their entropy degrades under popular confidence-based samplers. In our work, we formalize these fork tokens operationally to better understand how the choice of remasking strategy impacts sample diversity for dLLMs.

\newpage
\section{Algorithms}
\label{appendix:algos}


\begin{figure}[h]
\begin{minipage}[t]{0.48\textwidth}
\begin{algorithm}[H]
\caption{Low-Confidence (LC) Sampling}
\label{alg:lc}
\begin{algorithmic}[1]
\STATE \textbf{Input:} $\bm{x}_t$, $q_\theta$, $K$, $T_\text{token}$
\STATE \textbf{Output:} $\bm{x}_{t-1}$
\STATE $\mathcal{M}_t \gets \{k \in [L] \mid x_t^k = \texttt{[MASK]}\}$
\STATE $x_0^k \sim q_\theta^k(\cdot \mid \bm{x}_t;\, T_\text{token})$
\STATE $c_t^k \gets q_\theta^k(x_0^k \mid \bm{x}_t)$
\STATE $\mathcal{U}_t \gets \arg\!\operatorname{top\text{-}K}_{k \in \mathcal{M}_t}\, c_t^k$
\STATE $x_{t-1}^k := \begin{cases} x_0^k \:, & k \in \mathcal{U}_t \\ x_t^k \:, & \text{else} \end{cases}$
\end{algorithmic}
\end{algorithm}
\end{minipage}
\hfill
\begin{minipage}[t]{0.48\textwidth}
\begin{algorithm}[H]
\caption{Tempered Low-Confidence (TLC) Sampling}
\label{alg:tlc}
\begin{algorithmic}[1]
\STATE \textbf{Input:} $\bm{x}_t$, $q_\theta$, $K$, $T_\text{token}$, \colorbox{cyan!15}{$T_\text{pos}$}
\STATE \textbf{Output:} $\bm{x}_{t-1}$
\STATE $\mathcal{M}_t \gets \{k \in [L] \mid x_t^k = \texttt{[MASK]}\}$
\STATE $x_0^k \sim q_\theta^k(\cdot \mid \bm{x}_t;\, T_\text{token})$
\STATE $c_t^k \gets  q_\theta^k(x_0^k \mid \bm{x}_t)$
\STATE \colorbox{cyan!15}{$\tilde{c}_t^k \propto (c_t^k)^{1/T_\text{pos}} \cdot \indicator{k \in \mathcal M_t}$}
\STATE \colorbox{cyan!15}{$\mathcal{U}_t \sim \text{Cat}(\bm{\tilde{c}}_{t})$ w/o repl., $|\mathcal{U}_t| = K$}
\STATE $x_{t-1}^k := \begin{cases} x_0^k \:, & k \in \mathcal{U}_t \\ x_t^k \:, & \text{else} \end{cases}$
\end{algorithmic}
\end{algorithm}
\end{minipage}
\end{figure}

\begin{figure}[h]
\begin{minipage}[t]{0.48\textwidth}
\begin{algorithm}[H]
\caption{Confidence Thresholding (CT) Sampling; Fast-dLLM \citep{wu2025fastdllmtrainingfreeaccelerationdiffusion}}
\label{alg:ct}
\begin{algorithmic}[1]
\STATE \textbf{Input:} $\bm{x}_t$, $q_\theta$, $\lambda$, $T_\text{token}$
\STATE \textbf{Output:} $\bm{x}_{t-1}$
\STATE $\mathcal{M}_t \gets \{k \in [L] \mid x_t^k = \texttt{[MASK]}\}$
\STATE $x_0^k \sim q_\theta^k(\cdot \mid \bm{x}_t;\, T_\text{token})$
\STATE $c_t^k \gets  q_\theta^k(x_0^k \mid \bm{x}_t)$
\STATE $\mathcal{U}_t \gets \{k \in \mathcal{M}_t : c_t^k > \lambda\}$
\STATE $x_{t-1}^k := \begin{cases} x_0^k \:, & k \in \mathcal{U}_t \\ x_t^k \:, & \text{else} \end{cases}$
\end{algorithmic}
\end{algorithm}
\end{minipage}
\hfill
\begin{minipage}[t]{0.48\textwidth}
\begin{algorithm}[H]
\caption{Tempered Confidence Thresholding (TCT) Sampling}
\label{alg:tct}
\begin{algorithmic}[1]
\STATE \textbf{Input:} $\bm{x}_t$, $q_\theta$, $\lambda$, $T_\text{token}$, \colorbox{cyan!15}{$T_\text{pos}$}
\STATE \textbf{Output:} $\bm{x}_{t-1}$
\STATE $\mathcal{M}_t \gets \{k \in [L]\mid x_t^k = \texttt{[MASK]}\}$
\STATE $x_0^k \sim q_\theta^k(\cdot \mid \bm{x}_t;\, T_\text{token})$
\STATE $c_t^k \gets q_\theta^k(x_0^k \mid \bm{x}_t)$
\STATE \colorbox{cyan!15}{$b_t^k \sim \text{Ber}\!\left(\sigma\!\left((c_t^k - \lambda)/T_\text{pos}\right)\right)$}
\STATE \colorbox{cyan!15}{$\mathcal{U}_t \gets \{k \in \mathcal{M}_t : b_t^k = 1\}$}
\STATE $x_{t-1}^k := \begin{cases} x_0^k \:, & k \in \mathcal{U}_t \\ x_t^k \:, & \text{else} \end{cases}$
\end{algorithmic}
\end{algorithm}
\end{minipage}
\end{figure}

\newpage

\section{Formal statements and proofs}
\label{appendix:proofs}

In this appendix we develop the formal machinery described in \Cref{sec:methods_forks}.

\textbf{Roadmap.}
We begin by giving an operational definition of fork tokens (\Cref{def:fork-token}) and state the
main assumption underlying our analysis
(\Cref{assumption:anchor-fork}).
We then establish a pairwise ordering lemma between anchors and forks (\Cref{lem:pairwise})
and use it to prove our main result: increasing $\tpos$ increases
fork-token entropy (\Cref{prop:fork-entropy}).
Next, \Cref{cor:semantic-entropy}
ties this to the semantic sample entropy
through an additional assumption on the model quality (\Cref{assumption:model-quality}).
Finally, we show that under a mild additional assumption token temperature does not help preserve fork token entropy
(\Cref{prop:token-temp}) while autoregressive generation provably does, compared to low-confidence remasking (\Cref{prop:ar-diversity}).

\textbf{Notation.}
\begin{itemize}
    \item $\bm{x}_t$: a (possibly partially masked) string of length $L$ at time $t$.
    \item $\mathcal{M}_t = \{k \in [L] \mid x_t^k = \mask\}$: the set
          of masked positions in $\bm{x}_t$.
    \item $q_\theta$: the dLLM used.
    \item $q_\theta^k(\cdot | \bm{x}_t)$: its predicted marginal distribution over tokens at
          position $k$, when prompted with $\bm x_t$.
    \item $c_t^k$ or $c_{t, k}$: the confidence
          of the model $q_{\theta}$ at position $k$ in state $\bm{x}_t$ (see \Cref{sec:bg-remasking}). When using the superscript for other purposes (such as taking powers), we use the double subscript notation.
    \item $\tpos$: position temperature, controlling the stochasticity
          of the remasking order for TLC.
    \item $P^{\pi, \theta}$ or $P_{\tpos}$: the joint distribution
          over generation trajectories induced by pairing $q_\theta$ with
          remasking strategy $\pi$. The shorthand $P_{\tpos}$ is
          used when $\pi$ is TLC with position temperature $\tpos$.
    \item $T(j)$ or $T_\pi(j)$: the step at which position $j$ is
          unmasked. Since this is a random quantity in $P^{\pi, \theta}$, the subscript $\pi$ is included when the remasking
          strategy used needs to be made explicit.
    \item $H_{P^{\pi,\theta}}(x_0^\ell\mid\bm x_t)$: the entropy of the token generated at position $\ell$, under the generative process induced by $\pi$ and $q_\theta$ starting from $\bm{x}_t$.
    \item $\bar H_{P^{\pi, \theta}}(x_0^\ell \mid \bm x_t)$:
   the entropy of the model $q_\theta$'s predictive distribution at $\ell$ at the step where $\ell$ is unmasked, averaged over trajectories starting from $\bm{x}_t$:
          \begin{equation*}
          \bar H_{P^{\pi, \theta}}(x_0^\ell \mid \bm x_t) \triangleq \mathbb{E}_{\bm x_{T_\pi(\ell)-1} \sim P^{\pi, \theta}(\cdot \mid \bm x_t)}\left[ H_{q^\ell_\theta}(x_0^\ell \mid \bm x_{T_\pi(\ell)-1}) \right] \; .
          \end{equation*}
    \item $\llbracket \cdot \rrbracket : \mathcal V^L \to \mathcal S$:
          a function mapping strings to semantic outcomes (e.g., final
          answers for mathematics problems, or program outputs with
          respect to a fixed test suite).
\end{itemize}

With this notation in hand, we can now give an operational definition of fork tokens.

\begin{definition}[$(\epsilon, \delta)$-fork token]
\label{def:fork-token}
Let $\bm x_t$ be a partially masked string with masked positions
$\mathcal M_t$.
A position $\ell \in \mathcal M_t$ is an
$(\epsilon, \delta)$\emph{-fork token}
with respect to $p_\text{data}$ and $\bm x_t$ if:
\begin{enumerate}
    \item
    $H_{p_\text{data}}(x_0^\ell \mid \bm x_t,
    \llbracket \bm x_0 \rrbracket) \leq \epsilon$\,;
    \item
    $H_{p_\text{data}}(\llbracket \bm x_0 \rrbracket
    \mid \bm x_t, x_0^\ell) \leq \delta$\,.
\end{enumerate}
\end{definition}
That is, the fork token's value and the semantic outcome are
tightly coupled: knowing one constrains the other to a high degree of certainty.
Repeatedly applying the chain rule, it follows immediately that
\begin{equation}
\label{eq:sandwich}
H_{p_\text{data}}(x_0^\ell \mid \bm x_t) - \epsilon
\leq
H_{p_\text{data}}(\llbracket \bm x_0 \rrbracket \mid \bm x_t)
\leq
H_{p_\text{data}}(x_0^\ell \mid \bm x_t) + \delta \; .
\end{equation}
This aligns with the intuition in prior work \citep{wang2025beyond,ni2026flexibility} that fork-token entropy largely dictates the semantic diversity of samples.

To relate this definition to \citet{ni2026flexibility}'s empirical observations,
we need a model of \emph{how} the distribution $P^{\pi, \theta}$ induced by pairing $q_\theta$ with a remasking function $\pi$ interacts with such fork tokens.
Based on their findings,
we posit a simple model in which a fork token's predictive
entropy is governed by a small set of high-confidence
\emph{anchor} positions whose confidences dominate that of the
fork (see Appendix~\ref{app:assump1_empirical} for empirical validation).

\begin{assumption}[Anchor-fork degeneracy]
\label{assumption:anchor-fork}
Let $\ell$ be an $(\epsilon, \delta)$-fork token with respect to
$p_\text{data}$, $\llbracket \cdot \rrbracket$, and $\bm x_t$.
We say that $P^{\pi, \theta}$ suffers from an \emph{anchor-fork
degeneracy} at $\ell$ in $\bm x_t$ if there exist anchor
positions $\mathcal A \subset \mathcal M_t \setminus \{\ell\}$
with constants $\eta_a > 0$ such
that
for every state $\bm x_{t'}$
reachable through $P^{\pi, \theta}(\cdot \mid \bm x_t)$ with
$\ell \in \mathcal M_{t'}$:
\begin{enumerate}
    \item \textbf{There is a persistent anchor-fork confidence gap:}
    $c_{t'}^a > c_{t'}^\ell$
    for every remaining anchor $a \in \mathcal A \cap \mathcal M_{t'}$.
    \item \textbf{Revealing anchors linearly degrades the fork entropy:}
    \begin{equation}
    \label{eq:linear-entropy-reduction}
    H_{q^\ell_\theta}(x_0^\ell \mid \bm x_{t'})
    = H_{q^\ell_\theta}(x_0^\ell \mid \bm x_t)
    - \sum_{a \in \mathcal A \setminus \mathcal M_{t'}} \eta_a
    \end{equation}
    where
    $H_{q^\ell_\theta}(x^\ell_0 \mid \bm x_t) > \sum_{a \in \mathcal A} \eta_a > 0$.
\end{enumerate}
\end{assumption}

As mentioned in \Cref{sec:methods_forks},
this model is directly inspired by (and consistent with)
the empirical findings of
\citet{ni2026flexibility}:
(i)~LC deterministically reveals all anchors before the fork, yielding
minimal entropy;
(ii)~$T_\text{token}$ will not affect the unmasking \emph{order} unless it flips the confidences, and hence often leaves the expectation over $\pi$ unchanged
(Appendix~\ref{app:token-temp});
and (iii)~left-to-right generation yields strictly higher fork
entropy than LC whenever at least one anchor follows the fork in
positional order (Appendix~\ref{app:ar-diversity}).

We now give formal evidence for each of these claims,
beginning with a pairwise ordering bound that underlies the main proposition.

\subsection{A pairwise anchor-fork ordering bound}
\label{app:lemma}

\begin{lemma}[Pairwise anchor-fork ordering bound]
\label{lem:pairwise}
Let $\ell$ be a fork token subject to an anchor-fork degeneracy
(Assumption~\ref{assumption:anchor-fork}),
and let $a \in \mathcal A$ be one of its anchors.
Under TLC with $K=1$ at position temperature $\tpos > 0$,
let $T(\ell)$ and $T(a)$ denote the steps at which $\ell$ and $a$
are unmasked, respectively.
Define
$$
\delta_a \triangleq
  \inf_{\bm x_{t'}}
    \ln \frac{c_{t', a}}{c_{t', \ell}}
\qquad \text{and} \qquad
\Delta_a \triangleq
  \sup_{\bm x_{t'}}
    \ln \frac{c_{t', a}}{c_{t', \ell}}
$$
where both extrema are taken over all states $\bm x_{t'}$
reachable from $\bm x_t$ with $\{\ell, a\} \subset \mathcal M_{t'}$.
Then
$$
\sigma\!\left(\frac{\delta_a}{\tpos}\right)
\leq
P_{\tpos}\!\left(T(a) < T(\ell)\right)
\leq
\sigma\!\left(\frac{\Delta_a}{\tpos}\right)
$$
where
$\sigma(x) = 1/(1+e^{-x})$ is the sigmoid function.
\end{lemma}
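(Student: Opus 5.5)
The plan is to reduce the race between $a$ and $\ell$ to a sequence of conditional two-way comparisons. With $K=1$, TLC unmasks exactly one position per step, chosen from $\mathcal M_{t'}$ with probability proportional to $c_{t',k}^{1/\tpos}$. Consider the first step at which either $a$ or $\ell$ is selected; this step exists almost surely, since the masked set shrinks by one each step. At any state $\bm x_{t'}$ with $\{\ell,a\}\subset\mathcal M_{t'}$, condition on the event that the chosen position lies in $\{a,\ell\}$. The conditional probability that it is $a$ is
\[
\frac{c_{t',a}^{1/\tpos}}{c_{t',a}^{1/\tpos}+c_{t',\ell}^{1/\tpos}}
= \sigma\!\left(\frac{1}{\tpos}\ln\frac{c_{t',a}}{c_{t',\ell}}\right).
\]
Because $\sigma$ is monotone, this lies in $[\sigma(\delta_a/\tpos),\sigma(\Delta_a/\tpos)]$, and the bounds are uniform over all reachable states with both positions still masked.

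To lift this to the full probability, write $\tau$ for the (random) step at which the first of $a,\ell$ is unmasked, and let $\bm x_{t'}$ be the state just before step $\tau$. Then
\[
P_{\tpos}(T(a)<T(\ell))=\sum_{s}\ \mathbb{E}\!\left[\mathbf{1}\{\text{both masked before step } s\}\; p_s(\bm x)\right],
\]
where $p_s(\bm x)$ is the one-step probability of selecting $a$. Factor $p_s(\bm x)=r_s(\bm x)\,\rho_s(\bm x)$, where $r_s$ is the probability of hitting $\{a,\ell\}$ at step $s$ and $\rho_s$ is the conditional ratio above. Bounding $\rho_s$ between the two sigmoid constants yields $\sigma(\delta_a/\tpos)\,S \le P_{\tpos}(T(a)<T(\ell)) \le \sigma(\Delta_a/\tpos)\,S$, where $S=\sum_s\mathbb{E}[\mathbf{1}\{\cdot\}\,r_s]$. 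This $S$ is exactly the probability that one of $a,\ell$ is ever unmasked, so $S=1$. Equivalently, one can run a backward induction on the number of masked positions, proving that the probability of $a$ winning from any state with both masked lies between the constants. The inductive step is a convex combination: hit $a$, hit $\ell$, or move to another state where the hypothesis applies.

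The main obstacle is making the uniform-state argument rigorous. The trajectory is adaptive: the token values and confidences at later steps depend on the earlier choices, and $T_{\text{token}}$ sampling adds extra randomness to $c_{t'}$. The sup and inf must therefore be understood over all reachable states, including token randomness, and the conditional bound must hold pointwise before taking expectations. I would handle this with the backward induction on $|\mathcal M_{t'}|$, since it treats all randomness inside a single-step transition kernel and avoids any independence claim. The remaining detail is to note that the weights $c^{1/\tpos}$ are strictly positive, so the normalizations are well defined.
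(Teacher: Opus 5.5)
Your proposal is correct and follows the paper's argument. Like the paper, you condition on the first step at which $a$ or $\ell$ is selected, write the conditional probability of choosing $a$ as $\sigma\bigl(\ln(c_{t',a}/c_{t',\ell})/\tpos\bigr)$, and bound it using the monotonicity of $\sigma$. Your sum over hitting steps with $S=1$ (or, equivalently, your backward induction) adds one thing the paper leaves implicit: it rigorously averages over the random state at the hitting time, whereas the paper writes $P_{\tpos}(T(a)<T(\ell))$ directly as equal to the ratio evaluated at $t^*$.
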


\begin{proof}
Note first that $0 < \delta_a \leq \Delta_a < \infty$:
the lower bound follows from the confidence gap
(Assumption~\ref{assumption:anchor-fork}, condition~1),
and the upper bound from the fact that $c_{t', a} \leq 1$ and
$c_{t', \ell} > 0$ in every reachable state.
The statement is thus well-posed.

Now, under TLC with $K = 1$, each step selects a single masked position
$j$ with probability proportional to $c_{t, j}^{1/\tpos}$.
Let $t^* = \min(T(a), T(\ell))$ be the first step at which
either $a$ or $\ell$ is unmasked.
At step $t^*$, conditioning on the event that the selected position
lies in $\{a, \ell\}$ and marginalizing out all other
masked positions, we obtain
\begin{align*}
P_{\tpos}(T(a) < T(\ell))
&= \frac{c_{t^*, a}^{1/\tpos}}
        {c_{t^*, a}^{1/\tpos} + c_{t^*, \ell}^{1/\tpos}}
= \frac{(c_{t^*, a}/c_{t^*, \ell})^{1/\tpos}}
       {1 + (c_{t^*, a}/c_{t^*, \ell})^{1/\tpos}}
= \sigma\!\left(
    \frac{\ln(c_{t^*, a}/c_{t^*, \ell})}{\tpos}
  \right).
\end{align*}
Since
$\delta_a
 \leq \ln(c_{t^*, a}/c_{t^*, \ell})
 \leq \Delta_a$
and $\sigma$ is strictly increasing, the result follows.
\end{proof}

\subsection{Increasing $\tpos$ increases fork-token entropy}
\label{app:main}

\begin{proposition}[Increasing $\tpos$ increases fork-token entropy]
\label{prop:fork-entropy}
Let $\tpos, \tpos' > 0$ be two position temperatures for TLC at $K=1$.
Let $\ell$ be a fork token in state $\bm x_t$
subject to an anchor-fork degeneracy
(Assumption~\ref{assumption:anchor-fork})
under both $P_{\tpos}$ and $P_{\tpos'}$.
Let $\delta_a, \Delta_a$ be as in Lemma~\ref{lem:pairwise},
and define
$$
\delta \triangleq \min_{a \in \mathcal A} \delta_a,
\qquad
\Delta \triangleq \max_{a \in \mathcal A} \Delta_a.
$$
Then if
$\tpos' > \tpos \cdot \Delta / \delta$,
$$
\bar H_{P_{\tpos}}(x^\ell_0 \mid \bm x_t)
<
\bar H_{P_{\tpos'}}(x^\ell_0 \mid \bm x_t).
$$
\end{proposition}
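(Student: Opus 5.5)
By \Cref{assumption:anchor-fork}, condition~2, the fork's predictive entropy at the step just before $\ell$ is unmasked is a deterministic affine function of which anchors have already been revealed. So the first step is to rewrite $\bar H$ as an explicit sum of ordering probabilities. Writing $H_0 \triangleq H_{q^\ell_\theta}(x_0^\ell \mid \bm x_t)$, the anchors revealed by step $T(\ell)-1$ are exactly those $a\in\mathcal A$ with $T(a)<T(\ell)$. Taking expectations and using linearity then gives
\begin{equation*}
\bar H_{P_{\tpos}}(x_0^\ell \mid \bm x_t) = H_0 - \sum_{a\in\mathcal A} \eta_a\, P_{\tpos}\!\left(T(a)<T(\ell)\right),
\end{equation*}
and the same identity holds for $\tpos'$ with the same $H_0$ and $\eta_a$. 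The assumption is stated for both processes, so the constants are shared; if needed, I would take them to be the same because the entropy model refers only to $q_\theta$ and the state.

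It then suffices to show that $P_{\tpos'}(T(a)<T(\ell)) < P_{\tpos}(T(a)<T(\ell))$ for every anchor $a$. Since all $\eta_a>0$, summing these strict per-anchor inequalities gives the claimed strict inequality of the averaged entropies. To compare the probabilities I would apply \Cref{lem:pairwise} to each process. For $\tpos'$ the lemma's upper bound gives $P_{\tpos'}(T(a)<T(\ell)) \le \sigma(\Delta_a'/\tpos')$, and for $\tpos$ the lower bound gives $P_{\tpos}(T(a)<T(\ell)) \ge \sigma(\delta_a/\tpos)$. Here $\delta_a,\Delta_a$ are defined by extrema over the reachable states of the respective process. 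I would take $\delta$ and $\Delta$ in the statement to be the min and max over both processes, or equivalently note that under TLC every interleaving of masked positions has positive probability for any $\tpos>0$, so both processes have the same reachable set.

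The chain of inequalities is then
\begin{equation*}
P_{\tpos'}(T(a)<T(\ell)) \le \sigma\!\left(\frac{\Delta_a}{\tpos'}\right) \le \sigma\!\left(\frac{\Delta}{\tpos'}\right) < \sigma\!\left(\frac{\delta}{\tpos}\right) \le \sigma\!\left(\frac{\delta_a}{\tpos}\right) \le P_{\tpos}(T(a)<T(\ell)).
\end{equation*}
The strict middle step follows from strict monotonicity of $\sigma$ together with $\Delta/\tpos' < \delta/\tpos$, which is exactly the hypothesis $\tpos' > \tpos\Delta/\delta$ rearranged. This uses $\delta>0$, which the proof of \Cref{lem:pairwise} establishes from the confidence gap.

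I expect the main obstacle to be justifying the first display rigorously. I would need to check that under $K=1$ the state $\bm x_{T(\ell)-1}$ reveals precisely the anchors with $T(a)<T(\ell)$, so that the indicator $\indicator{a\notin\mathcal M_{T(\ell)-1}}$ equals $\indicator{T(a)<T(\ell)}$. I would also need to confirm that condition~2 applies at that state, which holds because $\ell$ is still masked there. A secondary point is making sure $\delta_a$ and $\Delta_a$ refer to the same reachable set for both temperatures, as handled above.
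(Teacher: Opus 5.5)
Your proposal is correct and follows essentially the same route as the paper: condition~2 and linearity give $\bar H = H_0 - \sum_{a\in\mathcal A} \eta_a\, P(T(a)<T(\ell))$, and you then compare the per-anchor ordering probabilities using the sigmoid bounds of \Cref{lem:pairwise} together with $\Delta/\tpos' < \delta/\tpos$. Your extra remarks on sharing $\mathcal A$, the $\eta_a$, and the reachable set across the two temperatures make explicit what the paper's proof leaves implicit.
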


\begin{proof}
By the linear entropy reduction condition
(Assumption~\ref{assumption:anchor-fork}, condition~2) and the linearity of expectation, we have for $\tau \in \{T_{\text{pos}}, T_{\text{pos}}' \}$ that
\begin{align*}
\bar H_{P^{\tau, \theta}}(x^\ell_0 \mid \bm x_t)
&= \mathbb{E}_{\bm x_{T_\tau(\ell)-1}\sim P^{\tau,\theta}}
\left[
H_{q^\ell_\theta}
(x_0^\ell \mid \bm x_{T_\tau(\ell)-1})
\right] \qquad \text{(by definition of $\bar H$)}
\\
&= \mathbb{E}_{\bm x_{T_\tau(\ell)-1}\sim P^{\tau,\theta}}
\left[
H_{q^\ell_\theta}(x^\ell_0 \mid \bm x_t)
- \sum_{a \in \mathcal A \setminus \mathcal M_{T_\tau(\ell)-1}} \eta_a
\right]
\qquad \text{(\Cref{assumption:anchor-fork})}
\\
&=
H_{q^\ell_\theta}(x^\ell_0 \mid \bm x_t)
-
\mathbb{E}_{\bm x_{T_\tau(\ell)-1}\sim P^{\tau,\theta}}
\left[
\sum_{a \in \mathcal A \setminus \mathcal M_{T_\tau(\ell)-1}} \eta_a
\right]
\\
&=
H_{q^\ell_\theta}(x^\ell_0 \mid \bm x_t)
-
\sum_{a \in \mathcal A}
P_{\tau}(T(a) < T(\ell)) \cdot \eta_a
\end{align*}
where $T_{\tau}(\ell)$ denotes the unmasking time of the fork. Since each $\eta > 0$, it thus suffices to show that
$P_{\tpos'}(T(a) < T(\ell)) < P_{\tpos}(T(a) < T(\ell))$
for every $a \in \mathcal A$.
Per Lemma~\ref{lem:pairwise} and the definitions of
$\delta, \Delta$:
$$
P_{\tpos'}(T(a) < T(\ell))
\leq
\sigma\!\left(\frac{\Delta}{\tpos'}\right)
<
\sigma\!\left(\frac{\delta}{\tpos}\right)
\leq
P_{\tpos}(T(a) < T(\ell)),
$$
where the strict inequality follows from the gap condition
$\tpos' > \tpos \cdot \Delta / \delta$,
which gives $\Delta / \tpos' < \delta / \tpos$,
combined with $\sigma$ being strictly increasing.
The result follows.
\end{proof}

\begin{remark}
\label{rem:tpos-zero}
\Cref{lem:pairwise}
only applies when $\tpos, \tpos' > 0$, so this proof does not allow us to compare TLC (at some $\tpos > 0$) vs LC.
However, note that this case is trivial:
Under LC, all anchors are revealed before the fork with probability 1 by the confidence gap (Assumption~\ref{assumption:anchor-fork}, condition~1), so
$$
\bar H_{P^{0,\theta}}(x_0^\ell \mid \bm x_t) = H_{q^\ell_\theta}(x_0^\ell \mid \bm x_t) - \sum_{a \in \mathcal A} \eta_a,
$$
the minimal fork entropy achievable under the anchor-fork degeneracy model.
Since any $\tpos > 0$ yields $P_{\tpos}(T(a) < T(\ell)) < 1$ for every anchor $a$ (that is, $\sigma(x)< 1$ for all finite $x$), it follows that
$\bar H_{P^{\tpos,\theta}}(x_0^\ell \mid \bm x_t) > \bar H_{P^{\text{LC},\theta}}(x_0^\ell \mid \bm x_t)$
for any $\tpos > 0$.
That is, any amount of tempering strictly improves expected fork-token entropy over deterministic LC.
\end{remark}

In order to tie this proposition back to \Cref{def:fork-token}
and conclude that the semantic entropy of the final generation will increase (\Cref{cor:semantic-entropy}),
we need one more assumption: that $q_\theta$ captures the data distribution well enough that no matter what $\pi$ is (out of the strategies under consideration), it will not stop $\ell$ from also being a fork token with respect to $P^{\pi, \theta}$.

\begin{assumption}[Model quality]
\label{assumption:model-quality}
Let $\ell$ be an $(\epsilon, \delta)$-fork token with respect to
$p_\text{data}$ and $\bm x_t$.
We say that $q_\theta$ \emph{preserves the fork structure} at
$\ell$ if $\ell$ is also an $(\epsilon, \delta)$-fork token
with respect to $P^{\pi, \theta}$ for all remasking strategies
$\pi$.
\end{assumption}

That is, if the data distribution has a fork at position $\ell$,
then $q_\theta$'s sampling distribution preserves that property
regardless of the remasking strategy, so that the sandwich
bound~\eqref{eq:sandwich} also holds under $P^{\pi, \theta}$.

We are now ready to finally make a statement about how $H_{P^{\pi, \theta}} (\llbracket \bm x_0 \rrbracket \mid \bm x_t)$ varies with $\tpos$: that is, how changing the position temperature affects the semantic entropy of our samples.

\begin{corollary}[Semantic entropy increases with $\tpos$]
\label{cor:semantic-entropy}
If $q_\theta$ preserves the fork structure
at $\ell$ (Assumption~\ref{assumption:model-quality})
and 
$\tpos,\tpos'$
are such that the gap in Proposition~\ref{prop:fork-entropy} satisfies
$$
\bar H_{P_{\tpos'}}(x^\ell_0 \mid \bm x_t)
- \bar H_{P_{\tpos}}(x^\ell_0 \mid \bm x_t)
> \epsilon + \delta + \gamma(\tpos) + \gamma(\tpos') \; ,
$$
where
$\gamma(\tau) \triangleq 
\big| H_{P_{\tau}}(x^\ell_0 \mid \bm x_t) - \bar H_{P_{\tau}}(x^\ell_0 \mid \bm x_t) \big|,
$
then
$$
H_{P_{\tpos'}}(\llbracket \bm x_0 \rrbracket \mid \bm x_t)
>
H_{P_{\tpos}}(\llbracket \bm x_0 \rrbracket \mid \bm x_t).
$$
\end{corollary}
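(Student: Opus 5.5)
The plan is to chain three inequalities. We apply the sandwich bound~\eqref{eq:sandwich} under the sampling distributions $P_{\tpos}$ and $P_{\tpos'}$. We then pass from the realized fork entropy $H$ to the averaged predictive entropy $\bar H$ via $\gamma$. Finally, we use the assumed gap.

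\textbf{Step 1: sandwich bound under the sampling distributions.} By Assumption~\ref{assumption:model-quality}, $\ell$ is an $(\epsilon,\delta)$-fork token with respect to $P^{\pi,\theta}$ for every $\pi$, in particular for TLC at $\tpos$ and at $\tpos'$. The chain-rule argument behind~\eqref{eq:sandwich} uses only the two defining conditions of Definition~\ref{def:fork-token}. It therefore applies verbatim with $p_\text{data}$ replaced by $P_\tau$, giving, for $\tau\in\{\tpos,\tpos'\}$,
\begin{equation*}
H_{P_\tau}(x_0^\ell \mid \bm x_t) - \epsilon \le H_{P_\tau}(\llbracket \bm x_0 \rrbracket \mid \bm x_t) \le H_{P_\tau}(x_0^\ell \mid \bm x_t) + \delta .
\end{equation*}
For completeness I would re-derive it briefly. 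Expand $H(x_0^\ell,\llbracket\bm x_0\rrbracket\mid\bm x_t)$ in two orders, which gives $H(\llbracket\bm x_0\rrbracket\mid\bm x_t) = H(x_0^\ell\mid\bm x_t) + H(\llbracket\bm x_0\rrbracket\mid\bm x_t,x_0^\ell) - H(x_0^\ell\mid\bm x_t,\llbracket\bm x_0\rrbracket)$. Then bound the last two terms by $[0,\delta]$ and $[0,\epsilon]$ respectively.

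\textbf{Step 2: replace $H$ by $\bar H$.} By the definition of $\gamma$, $H_{P_\tau}(x_0^\ell\mid\bm x_t) \ge \bar H_{P_\tau}(x_0^\ell\mid\bm x_t) - \gamma(\tau)$ and $H_{P_\tau}(x_0^\ell\mid\bm x_t) \le \bar H_{P_\tau}(x_0^\ell\mid\bm x_t) + \gamma(\tau)$. Use the lower sandwich bound at $\tpos'$ and the upper one at $\tpos$:
\begin{align*}
H_{P_{\tpos'}}(\llbracket \bm x_0 \rrbracket \mid \bm x_t) &\ge \bar H_{P_{\tpos'}}(x_0^\ell\mid\bm x_t) - \gamma(\tpos') - \epsilon, \\
H_{P_{\tpos}}(\llbracket \bm x_0 \rrbracket \mid \bm x_t) &\le \bar H_{P_{\tpos}}(x_0^\ell\mid\bm x_t) + \gamma(\tpos) + \delta .
\end{align*}

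\textbf{Step 3: combine with the gap.} Subtracting the two bounds shows the semantic-entropy difference is at least $\bar H_{P_{\tpos'}} - \bar H_{P_{\tpos}} - (\epsilon+\delta+\gamma(\tpos)+\gamma(\tpos'))$, which is strictly positive by hypothesis. Proposition~\ref{prop:fork-entropy} supplies a positive gap when $\tpos' > \tpos\Delta/\delta$, though the corollary does not need it formally. It only needs that gap to exceed the stated slack.

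The only point I expect to need care is Step 1: checking that Assumption~\ref{assumption:model-quality} really licenses conditioning on $\bm x_t$ under the generative distribution $P_\tau(\cdot\mid\bm x_t)$. The entropies in Definition~\ref{def:fork-token} must be read with respect to the joint law of $(x_0^\ell,\llbracket\bm x_0\rrbracket)$ induced by the trajectory. Once that interpretation is fixed, everything else is routine.
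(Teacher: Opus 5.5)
Your proposal is correct and matches the paper's proof: the paper chains the lower sandwich bound under $P_{\tpos'}$, the $\gamma(\tpos')$ correction, the gap hypothesis, the $\gamma(\tpos)$ correction, and the upper sandwich bound under $P_{\tpos}$ into one string of inequalities, which is exactly your Steps 1--3 (subtracting the two bounds is just a rearrangement of that chain). Your explicit chain-rule re-derivation of the sandwich bound under $P_\tau$ spells out a step the paper leaves implicit by appealing to Assumption~\ref{assumption:model-quality}.
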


\begin{proof}
Applying the sandwich bound~\eqref{eq:sandwich} under
$P^{\tpos', \theta}$ and $P^{\tpos, \theta}$ respectively
(which is valid by Assumption~\ref{assumption:model-quality}),
together with the definition of $\gamma$:
\begin{align*}
H_{P_{\tpos'}}(\llbracket \bm x_0 \rrbracket \mid \bm x_t)
&\geq H_{P_{\tpos'}}(x^\ell_0 \mid \bm x_t) - \epsilon
&& \text{(sandwich under $P_{\tpos'}$)} \\
&\geq \bar H_{P_{\tpos'}}(x^\ell_0 \mid \bm x_t) - \gamma(\tpos') - \epsilon
&& \text{(definition of $\gamma(\tpos')$)} \\
&> \bar H_{P_{\tpos}}(x^\ell_0 \mid \bm x_t) + \gamma(\tpos) + \delta
&& \text{(by the hypothesis)} \\
&\geq H_{P_{\tpos}}(x^\ell_0 \mid \bm x_t) + \delta
&& \text{(definition of $\gamma(\tpos)$)} \\
&\geq H_{P_{\tpos}}(\llbracket \bm x_0 \rrbracket \mid \bm x_t)
&& \text{(sandwich under $P_{\tpos}$)}
\end{align*}
as claimed.
\end{proof}

\subsection{Token temperature does not affect fork-token entropy under deterministic remasking}
\label{app:token-temp}

As shown in Appendix~\ref{appendix:algos}, the confidences $c_t^k$ do not depend on $T_\text{token}$ as typically implemented\footnote{See, e.g., \href{https://github.com/ML-GSAI/LLaDA/blob/570f29032d6824ea14977c89a8eb402e6eb25f96/generate.py\#L101}{the source code for LLaDA's low-confidence remasking} or \href{https://github.com/NVlabs/Fast-dLLM/blob/af42f511f705a2db0d16aa63ebcfcbc26f29181d/llada/generate.py\#L320}{Fast-dLLM's reference implementation for confidence-thresholding}.} in confidence-based heuristics.
One may speculate that allowing $T_\text{token}$ to affect the confidences and then increasing it would yield similar diversity gains to adjusting $\tpos$.
We here show that, under a strengthened version of the confidence-gap from \Cref{assumption:anchor-fork}, this is provably not the case:
the unmasking order under deterministic LC is invariant to
$T_\text{token}$ regardless, so adjusting $T_\text{token}$ does not help with preserving entropy at forks.

\begin{proposition}[Token temperature invariance under deterministic LC]
\label{prop:token-temp}
Let $\ell$ be a fork token subject to an anchor-fork degeneracy
(Assumption~\ref{assumption:anchor-fork}),
and suppose sampling is performed using deterministic
low-confidence remasking at $K = 1$, with confidences taken post-tempering with $T_\text{token}$.
Suppose further that for all reachable states $\bm x_{t'}$
with $\ell \in \mathcal M_{t'}$, every remaining anchor
$a \in \mathcal A \cap \mathcal M_{t'}$ is such that:
$$
z^a_{(1)} - z^a_{(i)} \geq z^\ell_{(1)} - z^\ell_{(i)}
\qquad \text{for all } i \geq 2, \text{with at least one inequality strict}
$$
where $z^j_{(1)} \geq z^j_{(2)} \geq \cdots$ denotes the logit
vector at position $j$ sorted in descending order.
Then all anchors are revealed before $\ell$ regardless of
$T_\text{token}$.
\end{proposition}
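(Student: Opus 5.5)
The plan is to reduce the claim to a pointwise comparison of tempered confidences in every reachable state, and then run an induction over the steps of deterministic LC at $K=1$. I will take the post-tempering confidence at position $j$ to be the maximal tempered probability, $c^j(T_\text{token}) = \max_v \mathrm{softmax}(z^j/T_\text{token})_v$. For a stochastically sampled token, this quantity upper-bounds, and in the greedy case equals, the footnoted definition; I come back to this choice at the end.

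\emph{Step 1 (closed form of the confidence).} Writing $g^j_i \triangleq z^j_{(1)} - z^j_{(i)} \geq 0$ for the logit gaps, we have
\begin{equation*}
c^j(T_\text{token}) = \frac{e^{z^j_{(1)}/T_\text{token}}}{\sum_{i=1}^{V} e^{z^j_{(i)}/T_\text{token}}} = \Big(1 + \sum_{i=2}^{V} e^{-g^j_i/T_\text{token}}\Big)^{-1}.
\end{equation*}
Both positions share the vocabulary $\mathcal V=[V]$, so the two sums have the same number of terms.

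\emph{Step 2 (pointwise gap for every temperature).} Fix a reachable state $\bm x_{t'}$ with $\ell \in \mathcal M_{t'}$ and a remaining anchor $a$. The hypothesis gives $g^a_i \geq g^\ell_i$ for every $i \geq 2$, with at least one index strict. Since $x \mapsto e^{-x/T_\text{token}}$ is strictly decreasing for any $T_\text{token} > 0$, each summand satisfies $e^{-g^a_i/T_\text{token}} \leq e^{-g^\ell_i/T_\text{token}}$, and at least one of these inequalities is strict. Summing, the denominator for $a$ is strictly smaller, so $c^a(T_\text{token}) > c^\ell(T_\text{token})$ for every $T_\text{token}>0$. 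The limiting greedy case $T_\text{token}\downarrow 0$ is handled separately: for $T_\text{token}\to 0$ the ordering either already holds by condition~1 of Assumption~\ref{assumption:anchor-fork}, or it follows from the strict gap persisting in the limit. Because the comparison is monotone in each gap individually, it is uniform in $T_\text{token}$. That uniformity is the whole content of the proposition.

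\emph{Step 3 (induction over LC steps).} At each step, deterministic LC with $K=1$ unmasks $\arg\max_{k\in\mathcal M_{t'}} c^k_{t'}$. Suppose some anchor is still masked while $\ell$ is masked. Step 2 applies in that state, because the hypothesis quantifies over all reachable states, including those reached under the $T_\text{token}$-tempered process. So that anchor strictly beats $\ell$, and hence $\ell$ is not selected. Inducting on the number of steps, $\ell$ is never unmasked while $\mathcal A \cap \mathcal M_{t'} \neq \emptyset$, so $T(a) < T(\ell)$ for all $a\in\mathcal A$ and all $T_\text{token}$. Strictness of the comparison means ties never arise, so no tie-breaking convention is needed. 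Combined with Remark~\ref{rem:tpos-zero}, this yields the minimal fork entropy irrespective of $T_\text{token}$.

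The main obstacle is the definitional point about which token's probability counts as the confidence. If the confidence is the probability of a \emph{sampled} token, as in the footnote in \Cref{sec:bg-remasking}, the comparison becomes random, and the logit-gap hypothesis no longer controls it pathwise. My first approach is to state explicitly that the proposition concerns the max-probability confidence, matching ``confidences taken post-tempering'' under the greedy argmax. As a fallback, I would note that the conclusion then holds on the event that the top tokens are sampled.
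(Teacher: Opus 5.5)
Your proposal is correct and follows the paper's proof essentially verbatim. It uses the same max-probability confidence, the same closed form $c^j(T)=\big(1+\sum_{i\ge 2}e^{-(z^j_{(1)}-z^j_{(i)})/T}\big)^{-1}$, and the same termwise comparison giving $c^a(T)>c^\ell(T)$ for every $T>0$; your Step~3 just spells out the argmax argument that the paper compresses into ``the unmasking order is thus preserved.'' One small correction: drop the aside about $T_\text{token}\downarrow 0$. When both positions have a unique top logit, $c^a(T)$ and $c^\ell(T)$ both tend to $1$, so the strict gap does \emph{not} persist in the limit, which is why the paper only claims the result for $T_\text{token}>0$.
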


\begin{proof}
Under this version of deterministic LC, the unmasking order is determined 
by $\arg\max_{j \in \mathcal M_t} c^j_t(T_\text{token})$,
where $c^j(T) = \max_v [\mathrm{softmax}(\bm z^j / T)]_v$.
It suffices to show that $c^a(T) > c^\ell(T)$ for all $T > 0$.

Let $V = |\mathcal V|$ and let $\bm z$ be any logit vector with
sorted entries $z_{(1)} \geq z_{(2)} \geq \cdots \geq z_{(V)}$.
Then
$$
c(T) = \frac{e^{z_{(1)}/T}}{\sum_{i=1}^V e^{z_{(i)}/T}}
     = \frac{1}{1 + \sum_{i=2}^V e^{(z_{(i)} - z_{(1)})/T}}.
$$
By the assumption that $z^a_{(1)} - z^a_{(i)} \geq z^\ell_{(1)} - z^\ell_{(i)}$
for all $i \geq 2$, we have
$z^a_{(i)} - z^a_{(1)} \leq z^\ell_{(i)} - z^\ell_{(1)}$
and hence $e^{(z^a_{(i)} - z^a_{(1)})/T} \leq e^{(z^\ell_{(i)} - z^\ell_{(1)})/T}$
for each $i$ and all $T > 0$.
Summing over $i \geq 2$ and using the assumption that at least one inequality is strict (which we note follows anyway from the confidence-gap condition of \Cref{assumption:anchor-fork}), we have that:
$$
c^a(T)
= \frac{1}{1 + \sum_{i=2}^V e^{(z^a_{(i)} - z^a_{(1)})/T}}
>
\frac{1}{1 + \sum_{i=2}^V e^{(z^\ell_{(i)} - z^\ell_{(1)})/T}}
= c^\ell(T).
$$
The unmasking order is thus preserved for all $T > 0$,
so all anchors are revealed before $\ell$ regardless of
$T_\text{token}$.
\end{proof}

\subsection{Autoregressive generation increases fork-token entropy}
\label{app:ar-diversity}

Finally, we observe that autoregressive (left-to-right) generation
yields higher fork-token entropy than deterministic low-confidence
remasking, since it does not systematically defer the fork.

\begin{proposition}[Autoregressive generation improves fork-token entropy]
\label{prop:ar-diversity}
Let $\ell$ be a fork token subject to an anchor-fork degeneracy
(Assumption~\ref{assumption:anchor-fork}) with respect to both LC and AR,
and suppose at least one anchor appears after $\ell$ in
left-to-right order.
Then
$$
\bar H_{P^{\mathrm{LC}, \theta}}(x^\ell_0 \mid \bm x_t)
<
\bar H_{P^{\mathrm{AR}, \theta}}(x^\ell_0 \mid \bm x_t).
$$
\end{proposition}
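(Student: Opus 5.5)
The plan is to reuse the linear-expectation decomposition from the proof of \Cref{prop:fork-entropy}. By condition~2 of \Cref{assumption:anchor-fork}, applied to each of $\pi \in \{\mathrm{LC}, \mathrm{AR}\}$ (the statement assumes the degeneracy holds under both), we get
\begin{equation*}
\bar H_{P^{\pi,\theta}}(x_0^\ell \mid \bm x_t) = H_{q^\ell_\theta}(x_0^\ell \mid \bm x_t) - \sum_{a \in \mathcal A} P^{\pi,\theta}\big(T_\pi(a) < T_\pi(\ell)\big)\, \eta_a .
\end{equation*}
This identity follows by writing the revealed-anchor sum inside the expectation as $\sum_{a} \eta_a \indicator{T_\pi(a) < T_\pi(\ell)}$ and using linearity of expectation. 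It is exactly the computation already carried out in the proof of \Cref{prop:fork-entropy}; nothing there was specific to TLC. The claim therefore reduces to comparing the probabilities $P^{\pi,\theta}(T_\pi(a) < T_\pi(\ell))$ for each anchor under the two strategies.

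For LC, I will invoke \Cref{rem:tpos-zero}. Condition~1 of \Cref{assumption:anchor-fork} gives $c^a_{t'} > c^\ell_{t'}$ for every remaining anchor in every reachable state with $\ell$ still masked, so the argmax rule can never select $\ell$ while an anchor remains masked. Hence $P^{\mathrm{LC},\theta}(T(a) < T(\ell)) = 1$ for all $a \in \mathcal A$, and the LC fork entropy equals the minimum $H_{q^\ell_\theta}(x_0^\ell \mid \bm x_t) - \sum_{a\in\mathcal A}\eta_a$.

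For AR, the unmasking order is the deterministic left-to-right order over $\mathcal M_t$, independent of confidences. So $P^{\mathrm{AR},\theta}(T(a) < T(\ell)) = \indicator{a < \ell}$, which is at most $1$ for every anchor. By hypothesis there is some $a^* \in \mathcal A$ with $a^* > \ell$, and for it the probability is $0$. Substituting into the decomposition gives
\begin{equation*}
\bar H_{P^{\mathrm{AR},\theta}} - \bar H_{P^{\mathrm{LC},\theta}} = \sum_{a \in \mathcal A,\, a > \ell} \eta_a \geq \eta_{a^*} > 0,
\end{equation*}
using $\eta_{a^*} > 0$ from \Cref{assumption:anchor-fork}. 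This gives the strict inequality.

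The main subtlety is bookkeeping rather than analysis. The decomposition evaluates the entropy at the state $\bm x_{T_\pi(\ell)-1}$, where $\ell$ is still masked, so condition~2 applies there. One must also check that this state is reachable under each $P^{\pi,\theta}$, which is exactly why the statement requires the degeneracy to hold with respect to both LC and AR. I would state this explicitly, and note that $K=1$ and masked positions processed strictly one at a time make $T_\pi(a) \neq T_\pi(\ell)$, so the indicator events are well defined.
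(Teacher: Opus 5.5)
Your proposal is correct and follows essentially the same route as the paper. Both arguments apply condition~2 of \Cref{assumption:anchor-fork} at the state just before $\ell$ is unmasked, use the confidence gap to show that every anchor precedes $\ell$ under LC, and note that under AR the anchors to the right of $\ell$ are still masked, so the entropy gap is $\sum_{a \in \mathcal A,\, a > \ell} \eta_a > 0$. The only difference is presentation: you work with the probabilities $P^{\pi,\theta}(T_\pi(a) < T_\pi(\ell))$, which also covers stochastic token sampling under LC and states the implicit $K=1$ assumption explicitly, whereas the paper treats the unmasking order as deterministic and compares the fixed sets $R_\pi$ directly.
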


\begin{proof}
Under both strategies the remasking order is deterministic,
so $R_\pi = \{a \in \mathcal A : T(a) < T(\ell)\}$ is a fixed set
for each.
Under LC, the confidence gap ensures
$R_{\mathrm{LC}} = \mathcal A$.
Under AR, $R_{\mathrm{AR}} = \{a \in \mathcal A : a < \ell\}
\subsetneq \mathcal A$ by assumption.
By the definition of $\bar H$ (as in the proof of Proposition~\ref{prop:fork-entropy}),
$$
\bar H_{P^{\pi, \theta}}(x_0^\ell \mid \bm x_t)
= \mathbb{E}_{\bm x_{T_\pi(\ell)-1}}
\left[
H_{q^\ell_\theta}(x_0^\ell \mid \bm x_{T_\pi(\ell)-1})
\right]
= H_{q^\ell_\theta}(x_0^\ell \mid \bm x_t)
  - \sum_{a \in R_\pi} \eta_a \, ,
$$
where the second equality applies condition~2 of
Assumption~\ref{assumption:anchor-fork}, noting that
$\mathcal A \setminus \mathcal M_{T_\pi(\ell)-1} = R_\pi$.
Since $R_{\mathrm{AR}} \subsetneq R_{\mathrm{LC}} = \mathcal A$
and each $\eta_a > 0$, we thus conclude that
$$
\bar H_{P^{\mathrm{AR}, \theta}}(x_0^\ell \mid \bm x_t)
>
\bar H_{P^{\mathrm{LC}, \theta}}(x_0^\ell \mid \bm x_t).
$$
\end{proof}

\newpage
\subsection{Empirical validation of the formal model}
\label{app:assump1_empirical}

\paragraph{Linear degradation of fork entropy.} Here we conduct an experiment to empirically validate \Cref{assumption:anchor-fork}. The core idea is this: since low-confidence (LC) sampling deterministically unmasks positions in order of their confidence, each token revealed by LC before a fork is a potential anchor under \Cref{assumption:anchor-fork}, condition~1. By measuring whether the fork token's entropy decreases linearly as these anchors are revealed (\Cref{assumption:anchor-fork}, condition~2), we can thus directly test whether the assumed mechanism holds.

Concretely, we first generate one completion per prompt for 100 random prompts per dataset, using AR generation for simplicity. We then bin each position's predictive entropy at the time it was revealed, calculate the 99\% quantile, and take positions exceeding that quantile to be potential forks. We then run generation with LC remasking instead of AR, starting with all positions before (i.e., to the left of) that token already revealed, and log how the fork position's entropy changes until it is selected for unmasking.

The results are shown in \Cref{fig:assump1_empirical}, with one row of figures per dataset. Across all of the datasets, we observe that anchor positions degrade the fork token's entropy at a roughly linear rate. As expected, the entropy does not degrade perfectly linearly, as stated in \Cref{assumption:anchor-fork}; this assumption simply makes the math tractable. However, these results suggest that our theoretical findings carry over to practical gains (as seen in our experiments) via the claimed mechanism.

\paragraph{Semantic diversity of forks.} Moreover, in order to verify that the positions actually correspond to semantic forks, we also conduct a small experiment in which we use deterministic LC generation with the fork position also revealed, set to each one of its top 5 most likely values as predicted by the model. We then measure how many of the 5 completions per fork are well-formed and semantically distinct from each other. The results are shown in \Cref{fig:assump1_sem_div}. Across datasets, between 44\% (GSM8K) and 79\% (MATH) of these positions produce two or more distinct semantic outcomes, far more than one would expect if these were merely uncertain syntactic choices. The lower fraction for GSM8K is likely due to the relative simplicity of the benchmark, causing high pass@1 (that is, high confidence in what the answer should be). 

\begin{figure}[h]
    \centering

    \includegraphics[width=0.96\textwidth]{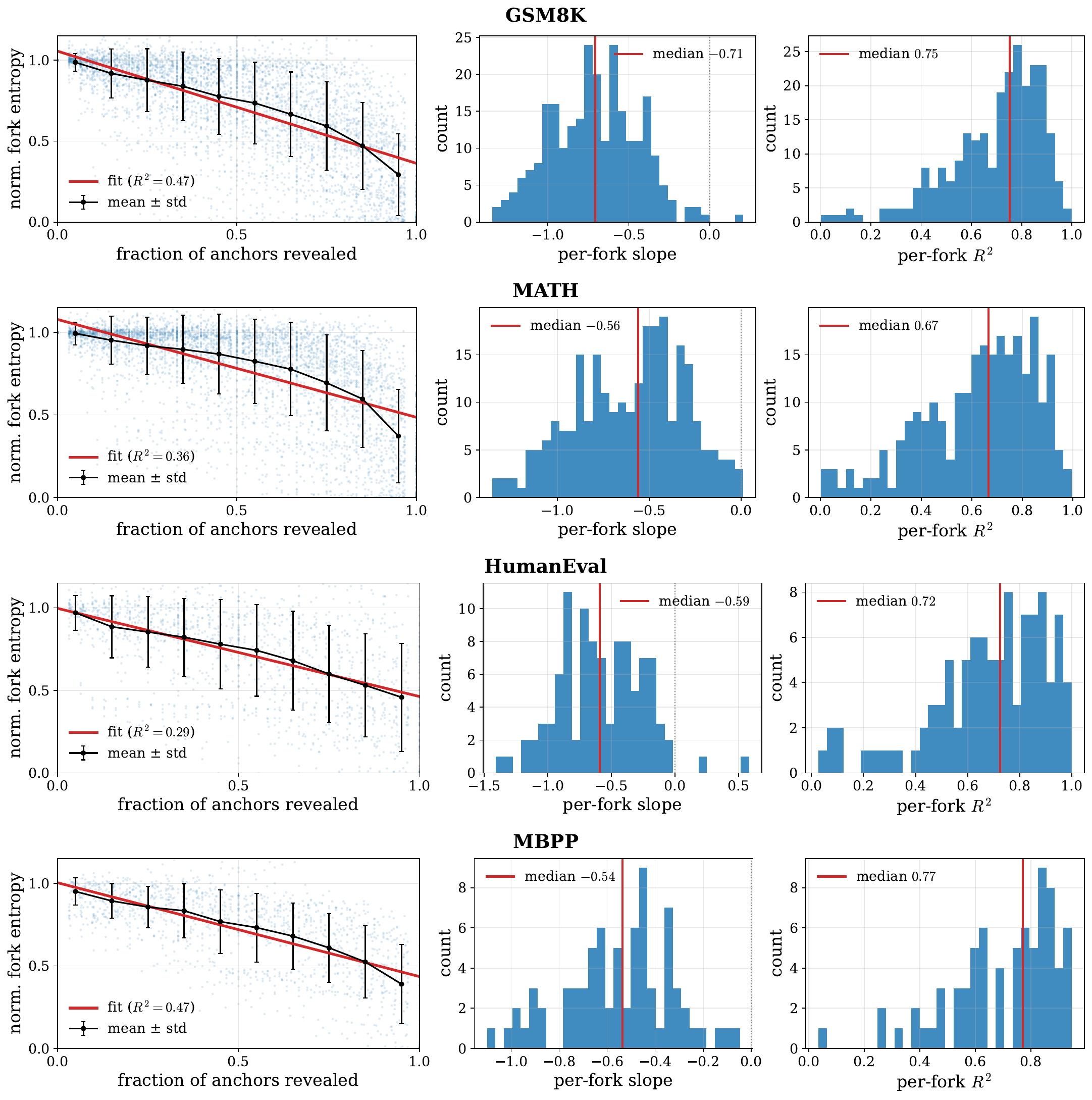}
    \caption{Empirical validation of \Cref{assumption:anchor-fork}. \textit{Left}: Entropy degradation at potential fork tokens as a function of how many potential anchors have been revealed, aggregated over all forks and samples. \textit{Middle}: Slopes of per-fork linear fits. \textit{Right}: $R^2$ correlation between each fork's entropy degradation and a linear fit on that fork's data. In all cases, we see that forks tend to decrease in entropy as their anchors are revealed. While different forks vary in how much their entropies change (causing large error bars and lower $R^2$ on the left), most exhibit an internally nearly-linear relationship (the median per-fork $R^2$ being $\sim0.7$ across datasets in the rightmost plot), and almost all decrease in entropy (middle).}
    \label{fig:assump1_empirical}
\end{figure}

\begin{figure}[h]
    \centering

    \includegraphics[width=0.96\textwidth]{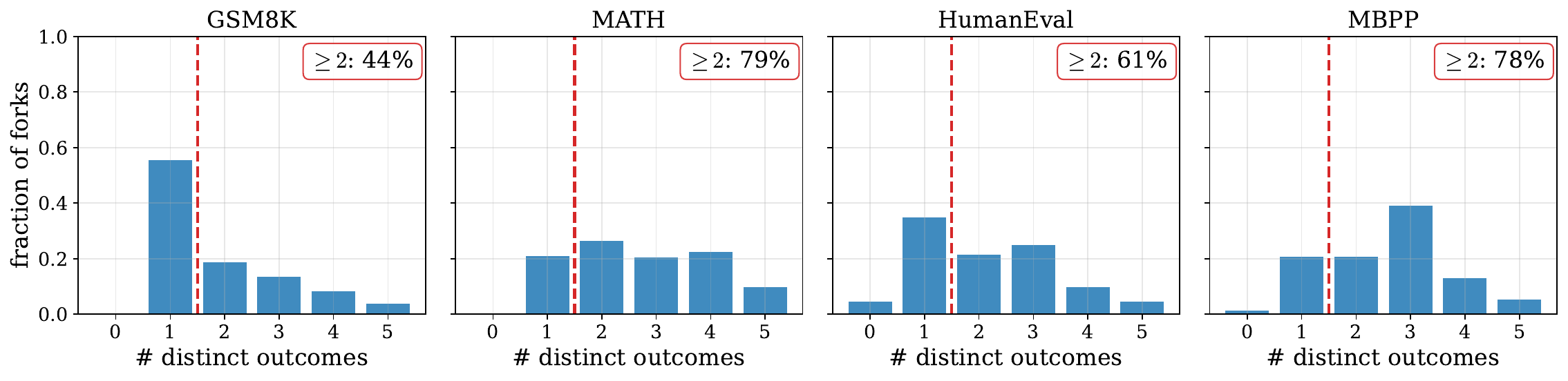}
    \caption{Number of semantically distinct and well-formed outcomes when setting a fork position from \Cref{fig:assump1_empirical} to each of its top 5 most likely values. Overall, a majority of the fork positions produce $\ge 2$ valid and semantically distinct outcomes, indicating that they play an important role in determining the semantics of the outcome (rather than just syntax).}
    \label{fig:assump1_sem_div}
\end{figure}

\clearpage
\section{Additional figures}
\label{appendix:additional-figures}

\begin{figure}[h]
    \centering

    \includegraphics[width=0.5\textwidth]{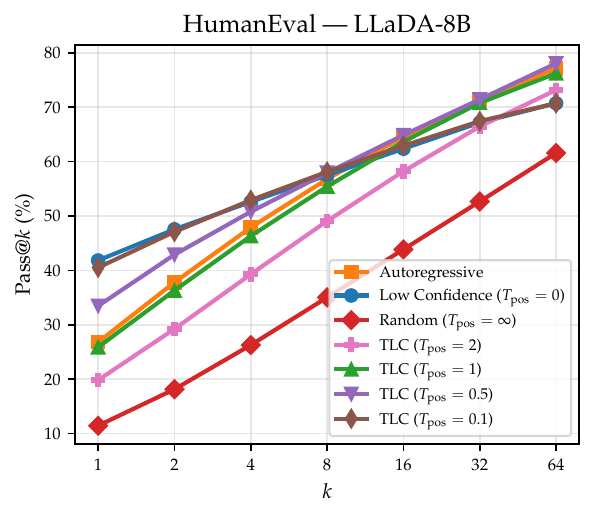}
    \caption{TLC $T_{\text{pos}}$ ablation on HumanEval with LLaDA-8B-Instruct at $T_{\text{token}} = 0.8$. Too little tempering ($T_{\text{pos}} = 0.1$) remains close to low-confidence remasking, while too much ($T_{\text{pos}} = 2$) approaches random remasking and degrades quality. Intermediate values ($T_{\text{pos}} \in \{0.5, 1\}$) strike the best balance, matching or exceeding the autoregressive baseline at high $k$.}
    \label{fig:t_pos_ablate}
\end{figure}

\begin{figure}[h]
    \centering

    \includegraphics[width=\textwidth]{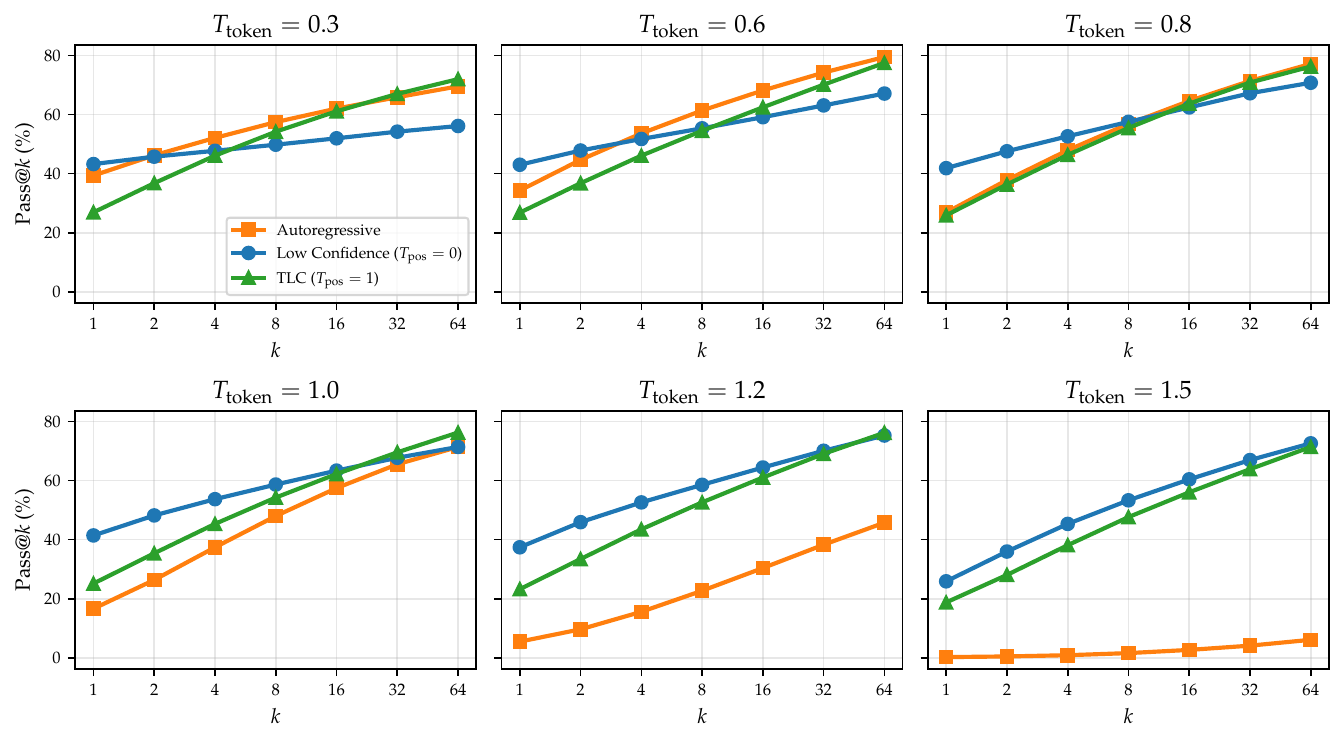}
    \caption{Varying $T_\text{token}$ for LLaDA-8B-Instruct on HumanEval. Autoregressive generation favors low temperatures, obtaining its best pass@64 at $T_\text{token}=0.6$. Meanwhile, low-confidence remasking scales very poorly at low token temperatures due to its greedy position selection. TLC consistently shows strong scaling and most robustness to the exact choice of $T_{\text{token}}$, matching or exceeding the best of autoregressive and low-confidence remasking in pass@64 at all temperatures. Moreover, it exhibits stronger scaling compared to low-confidence across all considered $T_{\text{token}}$, suggesting it achieves diversity through an alternative mechanism.}
    \label{fig:t_token_ablate}
\end{figure}

\begin{figure}[h]
    \centering

    \includegraphics[width=\textwidth]{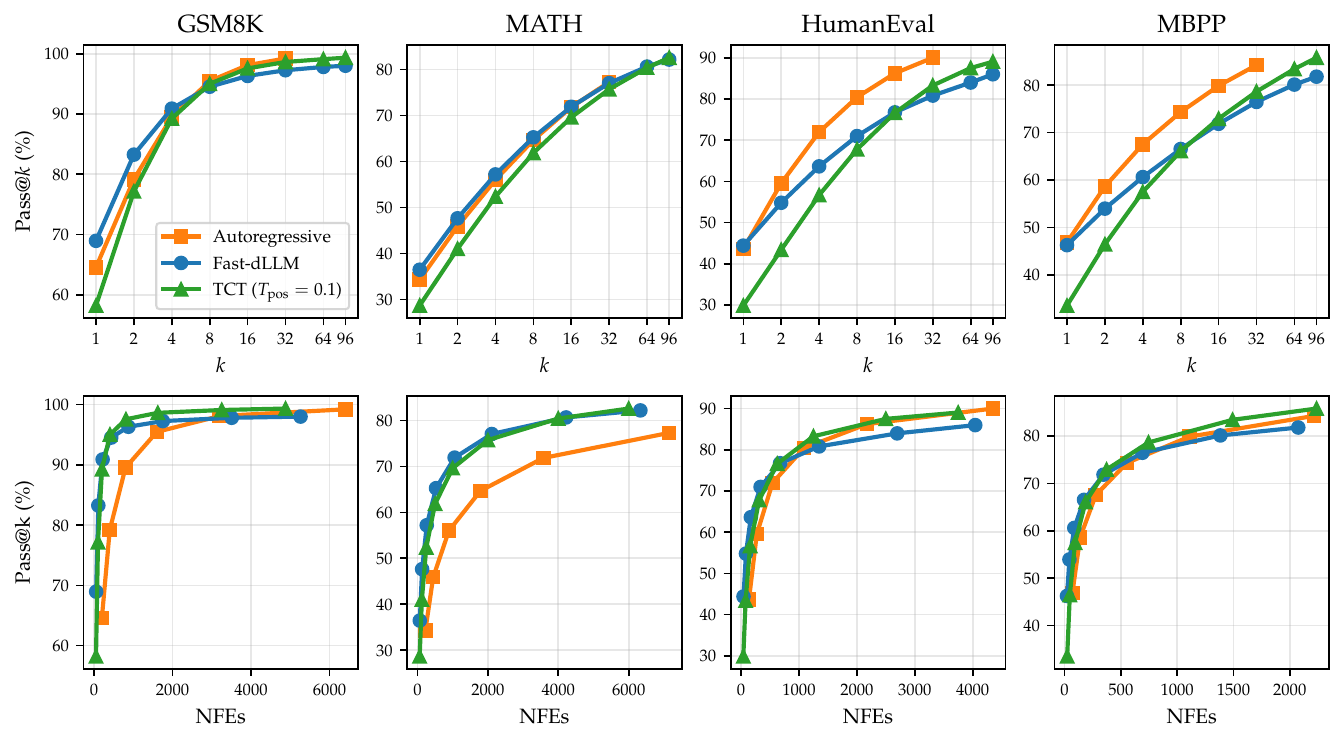}
    \caption{Pass@$k$ (top) and pass@NFE (bottom) for TCT on Dream-7B-Instruct. Note that both rows plot the same data, we just vary the x-axis measure ($k$ vs NFEs). While TCT slightly underperforms AR in pass@$k$, it outperforms AR when the cost of each rollout is taken into account.}
    \label{fig:dream_tct_all_ds}
\end{figure}

\begin{figure}
    \centering
    \includegraphics[width=\linewidth]{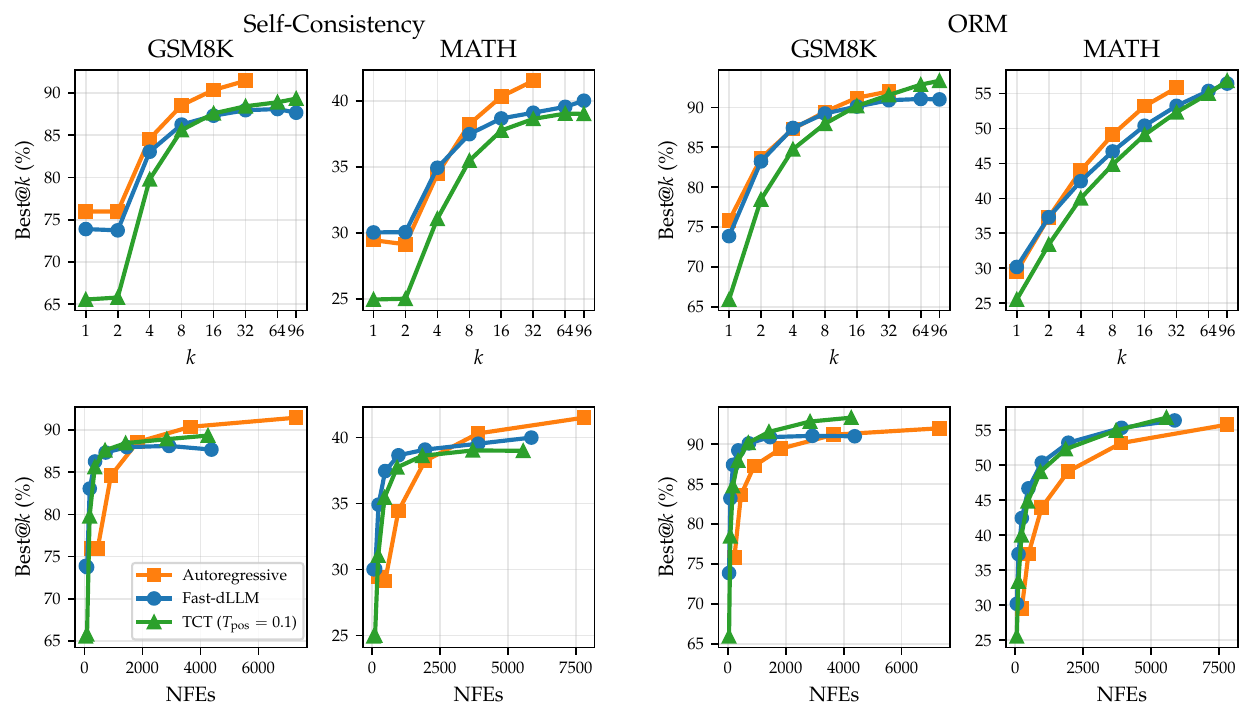}
    \caption{Test-time scaling for LLaDA-8B-Instruct in terms of Best@$k$ (top) and Best@NFE (bottom) when using either self-consistency or an ORM for final answer selection. TCT performs worse compared to AR at higher NFEs when using self-consistency; however, when using an ORM, TCT improves the Pareto frontier compared to autoregressive rollouts.}
    \label{fig:ttc_bestk_bestnfe}
\end{figure}

\begin{figure}
    \centering
    \vspace{-20pt}
    \includegraphics[width=0.75\linewidth]{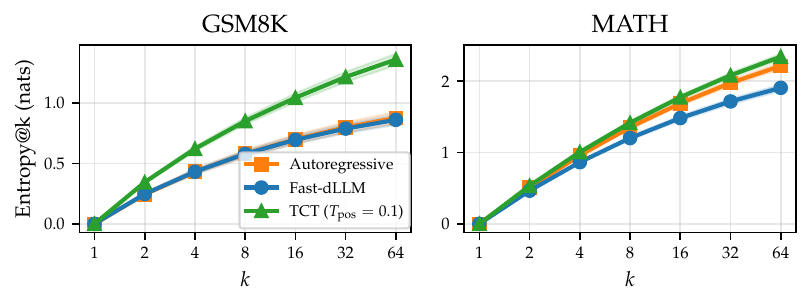}
    \caption{Mean empirical entropy of the per-question answer distributions (see \Cref{fig:ans_dist_gsm8k}) as a function of the group size $k$. TCT consistently yields higher-entropy answer distributions than autoregressive and confidence thresholding, particularly on GSM8K.}
    \label{fig:entropy-vs-k}
\end{figure}

\begin{figure}
    \vspace{-20pt}
    \centering

    \includegraphics[width=\textwidth]{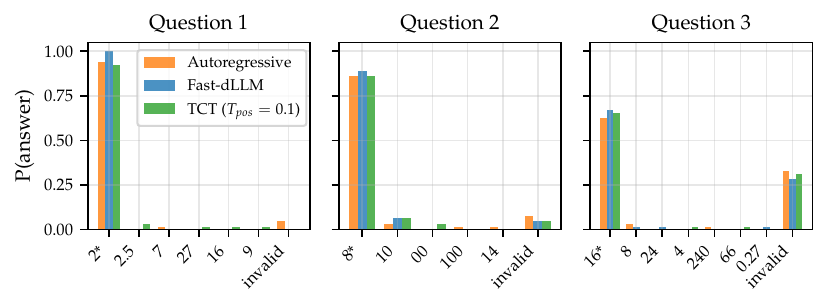}
    \caption{Per-question answer distributions over $k=64$ samples for three GSM8K questions, comparing autoregressive, Fast-dLLM, and TCT sampling. Correct answers are marked with an asterisk (*). We observe that TCT sometimes spreads mass across a wider set of candidates (left). This illustrates the higher-entropy behavior observed in \Cref{fig:entropy-vs-k}: TCT produces more diverse answers, which benefits pass@$k$ and ORM-based selection but can hurt majority voting when the correct answer no longer dominates the distribution.}
    \label{fig:ans_dist_gsm8k}
\end{figure}

\begin{figure}
    \vspace{-20pt}
    \centering

    \includegraphics[width=\textwidth]{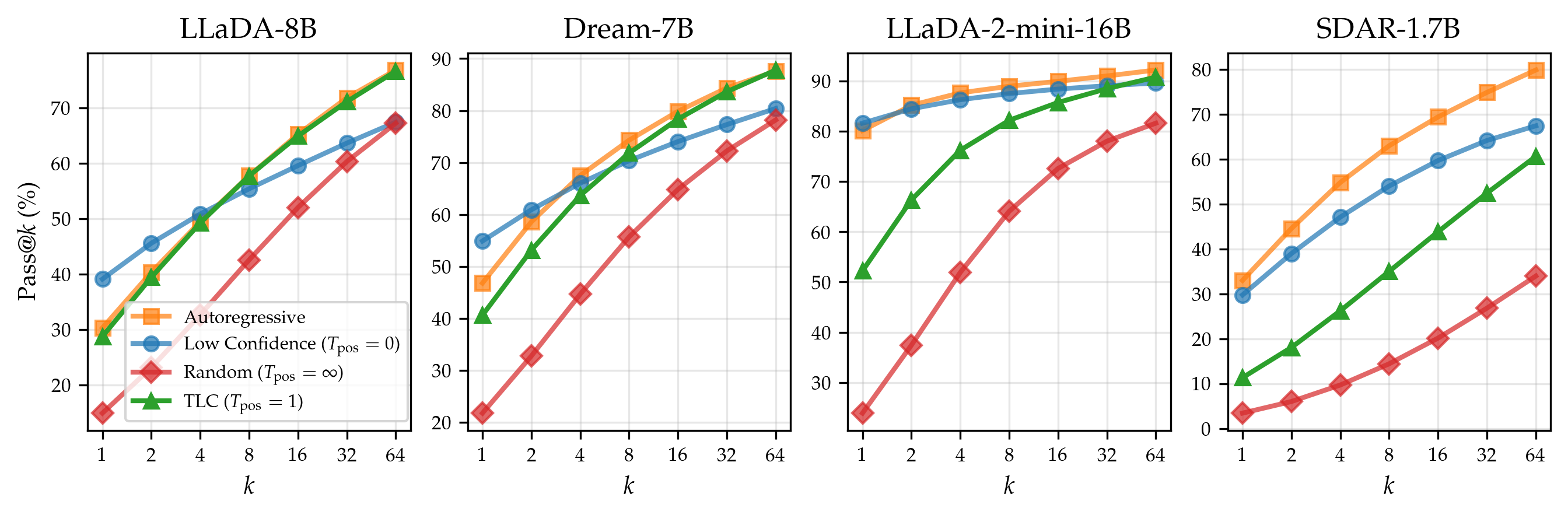}
    \caption{Pass@$k$ for TLC ($T_{\mathrm{pos}}=1$) on MBPP across dLLMs with different amounts of diffusion training. LLaDA-8B was trained as a diffusion model from scratch on approximately 2.3T tokens, whereas Dream-7B, LLaDA-2-mini-16B, and SDAR-1.7B were initialized from autoregressive models and subsequently diffusion-trained on approximately 580B, fewer than 200B, and 50B tokens, respectively. The LLaDA-8B and Dream-7B panels reproduce the MBPP results from \Cref{fig:llada_tcl_all_ds}. TLC closely tracks autoregressive sampling on the more extensively diffusion-trained LLaDA-8B and Dream-7B models, but leaves a larger gap on LLaDA-2-mini-16B and especially SDAR-1.7B.}
    \label{fig:tlc_mbpp_sdar_llada2}
\end{figure}

\begin{figure}[h]
    \centering

    \includegraphics[width=\textwidth]{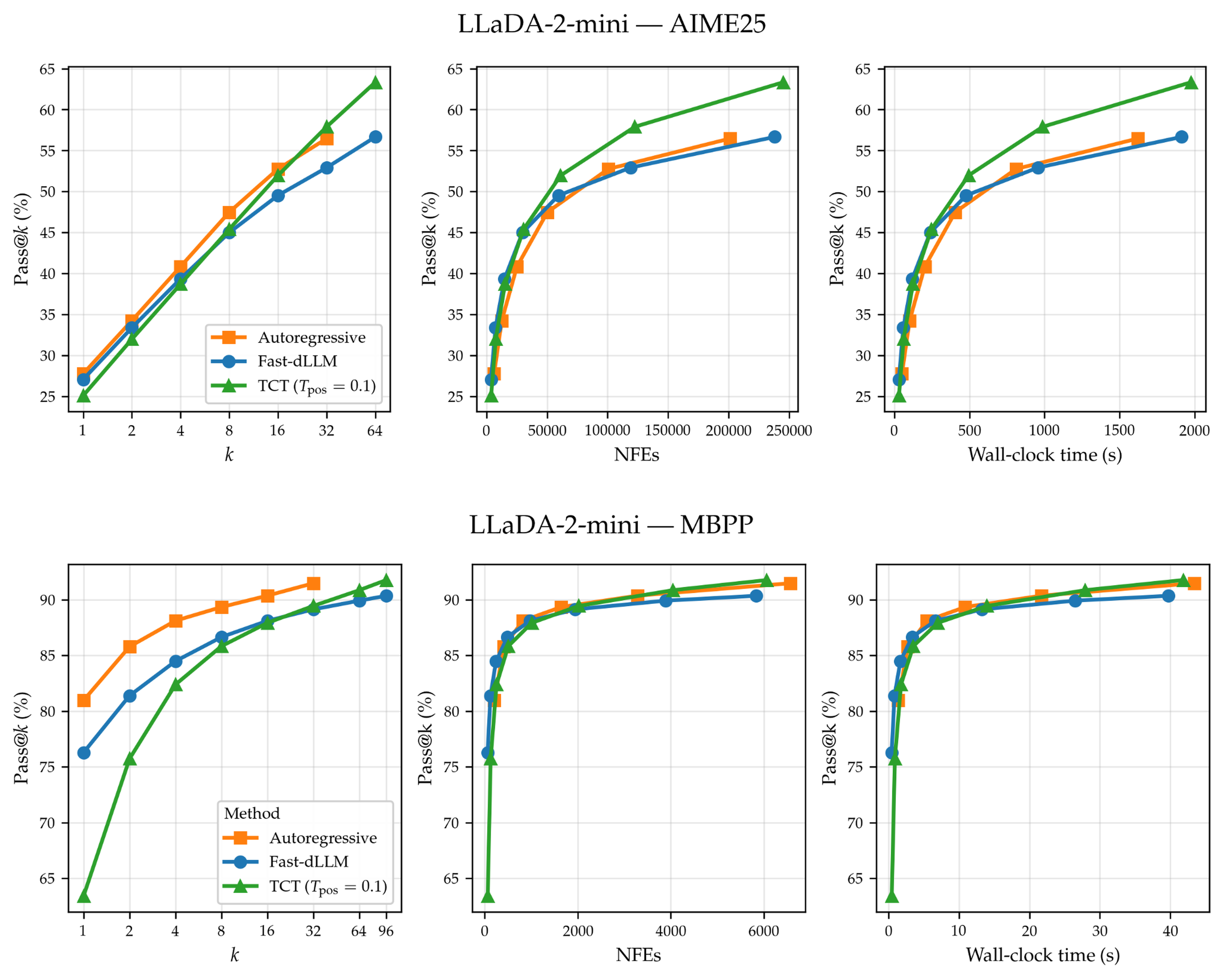}
    \caption{Pass@$k$ and pass@NFE for TCT on LLaDA-2-mini \citep{bie2025llada2}. In addition to pass@$k$ (\textit{Left}) and pass@NFE (\textit{Middle}), we here additionally report pass@wall-clock-time (\textit{Right}) to show that our proposed TCT has a negligible overhead in terms of latency compared to Fast-dLLM.} 
    \label{fig:llada2_tct_all_ds}
\end{figure}

\clearpage
\section{Numerical results}
\label{appendix:numerical-results}
Here we reproduce our main experimental results in a table format (with bootstrapped error bars) for completeness.

\subsection{TLC (\Cref{fig:llada_tcl_all_ds})}
\begin{table}[h]
\centering
\caption{Pass@$k$ (\%) with bootstrapped 95\% confidence intervals (10{,}000 resamples over questions/tasks, 64 samples per question, $T_{\mathrm{token}}=0.8$). Values are mean $\pm$ half-width of the 95\% CI. Best value per column within each dataset block in bold.  Evaluation sets: GSM8K 300 questions, MATH 500, HumanEval 164 tasks, MBPP 500.}
\label{tab:tlc-passk-ci}
\resizebox{\textwidth}{!}{%
\begin{tabular}{llccccccc}
\toprule
& & \multicolumn{7}{c}{$k$} \\
\cmidrule(lr){3-9}
Dataset & Method & 1 & 2 & 4 & 8 & 16 & 32 & 64 \\
\midrule
\multicolumn{9}{c}{\textit{LLaDA-8B-Instruct}} \\
\midrule
\multirow{4}{*}{GSM8K} & Autoregressive & 75.9{\scriptsize$\,\pm\,$3.1} & 86.9{\scriptsize$\,\pm\,$2.7} & \textbf{92.7}{\scriptsize$\,\pm\,$2.1} & \textbf{96.0}{\scriptsize$\,\pm\,$1.7} & \textbf{97.8}{\scriptsize$\,\pm\,$1.4} & \textbf{98.6}{\scriptsize$\,\pm\,$1.2} & 99.0{\scriptsize$\,\pm\,$1.2} \\
 & Low Confidence ($T_{\mathrm{pos}}=0$) & \textbf{78.8}{\scriptsize$\,\pm\,$3.2} & \textbf{87.3}{\scriptsize$\,\pm\,$2.9} & 91.9{\scriptsize$\,\pm\,$2.5} & 94.6{\scriptsize$\,\pm\,$2.1} & 96.3{\scriptsize$\,\pm\,$1.9} & 97.2{\scriptsize$\,\pm\,$1.7} & 98.0{\scriptsize$\,\pm\,$1.5} \\
 & Random ($T_{\mathrm{pos}}=\infty$) & 55.5{\scriptsize$\,\pm\,$2.9} & 73.8{\scriptsize$\,\pm\,$3.1} & 85.9{\scriptsize$\,\pm\,$2.8} & 92.4{\scriptsize$\,\pm\,$2.2} & 95.9{\scriptsize$\,\pm\,$1.7} & 97.9{\scriptsize$\,\pm\,$1.3} & \textbf{99.3}{\scriptsize$\,\pm\,$0.8} \\
 & \textbf{TLC} ($T_{\mathrm{pos}}=1$) & 72.7{\scriptsize$\,\pm\,$3.0} & 85.7{\scriptsize$\,\pm\,$2.7} & 92.3{\scriptsize$\,\pm\,$2.2} & 95.8{\scriptsize$\,\pm\,$1.8} & 97.5{\scriptsize$\,\pm\,$1.5} & 98.4{\scriptsize$\,\pm\,$1.2} & 99.0{\scriptsize$\,\pm\,$1.2} \\
\cmidrule(lr){1-9}
\multirow{4}{*}{MATH} & Autoregressive & 29.6{\scriptsize$\,\pm\,$3.0} & 39.2{\scriptsize$\,\pm\,$3.4} & 48.1{\scriptsize$\,\pm\,$3.7} & 56.5{\scriptsize$\,\pm\,$3.7} & \textbf{64.2}{\scriptsize$\,\pm\,$3.6} & \textbf{70.9}{\scriptsize$\,\pm\,$3.6} & \textbf{77.0}{\scriptsize$\,\pm\,$3.8} \\
 & Low Confidence ($T_{\mathrm{pos}}=0$) & \textbf{34.1}{\scriptsize$\,\pm\,$3.4} & \textbf{42.1}{\scriptsize$\,\pm\,$3.6} & \textbf{49.5}{\scriptsize$\,\pm\,$3.8} & \textbf{56.6}{\scriptsize$\,\pm\,$3.8} & 63.1{\scriptsize$\,\pm\,$3.7} & 68.7{\scriptsize$\,\pm\,$3.7} & 73.4{\scriptsize$\,\pm\,$3.9} \\
 & Random ($T_{\mathrm{pos}}=\infty$) & 20.4{\scriptsize$\,\pm\,$2.3} & 30.1{\scriptsize$\,\pm\,$2.9} & 40.1{\scriptsize$\,\pm\,$3.4} & 49.8{\scriptsize$\,\pm\,$3.6} & 59.1{\scriptsize$\,\pm\,$3.6} & 67.4{\scriptsize$\,\pm\,$3.7} & 74.6{\scriptsize$\,\pm\,$3.8} \\
 & \textbf{TLC} ($T_{\mathrm{pos}}=1$) & 28.4{\scriptsize$\,\pm\,$2.9} & 37.8{\scriptsize$\,\pm\,$3.4} & 46.6{\scriptsize$\,\pm\,$3.6} & 55.0{\scriptsize$\,\pm\,$3.7} & 62.6{\scriptsize$\,\pm\,$3.7} & 69.3{\scriptsize$\,\pm\,$3.7} & 75.2{\scriptsize$\,\pm\,$3.8} \\
\cmidrule(lr){1-9}
\multirow{4}{*}{HumanEval} & Autoregressive & 26.8{\scriptsize$\,\pm\,$4.6} & 37.5{\scriptsize$\,\pm\,$5.7} & 47.5{\scriptsize$\,\pm\,$6.3} & 56.3{\scriptsize$\,\pm\,$6.3} & \textbf{64.0}{\scriptsize$\,\pm\,$6.5} & \textbf{71.0}{\scriptsize$\,\pm\,$6.2} & \textbf{76.8}{\scriptsize$\,\pm\,$6.4} \\
 & Low Confidence ($T_{\mathrm{pos}}=0$) & \textbf{41.8}{\scriptsize$\,\pm\,$6.8} & \textbf{47.5}{\scriptsize$\,\pm\,$6.8} & \textbf{52.6}{\scriptsize$\,\pm\,$6.9} & \textbf{57.5}{\scriptsize$\,\pm\,$6.8} & 62.4{\scriptsize$\,\pm\,$6.8} & 67.2{\scriptsize$\,\pm\,$6.7} & 70.7{\scriptsize$\,\pm\,$7.0} \\
 & Random ($T_{\mathrm{pos}}=\infty$) & 11.5{\scriptsize$\,\pm\,$2.9} & 18.2{\scriptsize$\,\pm\,$4.0} & 26.3{\scriptsize$\,\pm\,$5.2} & 35.0{\scriptsize$\,\pm\,$5.9} & 43.9{\scriptsize$\,\pm\,$6.5} & 52.7{\scriptsize$\,\pm\,$6.8} & 61.6{\scriptsize$\,\pm\,$7.3} \\
 & \textbf{TLC} ($T_{\mathrm{pos}}=1$) & 25.9{\scriptsize$\,\pm\,$4.7} & 36.3{\scriptsize$\,\pm\,$5.7} & 46.3{\scriptsize$\,\pm\,$6.3} & 55.4{\scriptsize$\,\pm\,$6.3} & 63.7{\scriptsize$\,\pm\,$6.5} & 70.8{\scriptsize$\,\pm\,$6.4} & 76.2{\scriptsize$\,\pm\,$6.7} \\
\cmidrule(lr){1-9}
\multirow{4}{*}{MBPP} & Autoregressive & 30.3{\scriptsize$\,\pm\,$2.9} & 40.3{\scriptsize$\,\pm\,$3.4} & 49.4{\scriptsize$\,\pm\,$3.6} & \textbf{57.7}{\scriptsize$\,\pm\,$3.7} & \textbf{65.2}{\scriptsize$\,\pm\,$3.6} & \textbf{71.8}{\scriptsize$\,\pm\,$3.6} & \textbf{76.8}{\scriptsize$\,\pm\,$3.7} \\
 & Low Confidence ($T_{\mathrm{pos}}=0$) & \textbf{39.1}{\scriptsize$\,\pm\,$3.7} & \textbf{45.5}{\scriptsize$\,\pm\,$3.9} & \textbf{50.8}{\scriptsize$\,\pm\,$4.0} & 55.4{\scriptsize$\,\pm\,$3.9} & 59.5{\scriptsize$\,\pm\,$4.0} & 63.6{\scriptsize$\,\pm\,$4.0} & 67.4{\scriptsize$\,\pm\,$4.1} \\
 & Random ($T_{\mathrm{pos}}=\infty$) & 14.9{\scriptsize$\,\pm\,$1.9} & 23.1{\scriptsize$\,\pm\,$2.5} & 32.6{\scriptsize$\,\pm\,$3.1} & 42.5{\scriptsize$\,\pm\,$3.5} & 52.0{\scriptsize$\,\pm\,$3.8} & 60.3{\scriptsize$\,\pm\,$3.9} & 67.2{\scriptsize$\,\pm\,$4.1} \\
 & \textbf{TLC} ($T_{\mathrm{pos}}=1$) & 28.7{\scriptsize$\,\pm\,$2.7} & 39.4{\scriptsize$\,\pm\,$3.3} & 49.2{\scriptsize$\,\pm\,$3.6} & 57.7{\scriptsize$\,\pm\,$3.7} & 64.9{\scriptsize$\,\pm\,$3.6} & 71.1{\scriptsize$\,\pm\,$3.6} & 76.6{\scriptsize$\,\pm\,$3.7} \\
\midrule
\multicolumn{9}{c}{\textit{Dream-v0-Instruct-7B}} \\
\midrule
\multirow{4}{*}{GSM8K} & Autoregressive & 64.6{\scriptsize$\,\pm\,$3.3} & 79.2{\scriptsize$\,\pm\,$2.9} & 89.5{\scriptsize$\,\pm\,$2.2} & 95.5{\scriptsize$\,\pm\,$1.6} & \textbf{98.1}{\scriptsize$\,\pm\,$1.1} & \textbf{99.2}{\scriptsize$\,\pm\,$0.8} & \textbf{99.7}{\scriptsize$\,\pm\,$0.5} \\
 & Low Confidence ($T_{\mathrm{pos}}=0$) & \textbf{80.6}{\scriptsize$\,\pm\,$3.1} & \textbf{88.7}{\scriptsize$\,\pm\,$2.8} & \textbf{92.9}{\scriptsize$\,\pm\,$2.4} & 95.3{\scriptsize$\,\pm\,$2.1} & 96.6{\scriptsize$\,\pm\,$1.8} & 97.3{\scriptsize$\,\pm\,$1.8} & 97.7{\scriptsize$\,\pm\,$1.8} \\
 & Random ($T_{\mathrm{pos}}=\infty$) & 41.0{\scriptsize$\,\pm\,$2.5} & 60.6{\scriptsize$\,\pm\,$3.0} & 77.8{\scriptsize$\,\pm\,$3.0} & 88.8{\scriptsize$\,\pm\,$2.5} & 94.4{\scriptsize$\,\pm\,$2.0} & 97.0{\scriptsize$\,\pm\,$1.6} & 98.3{\scriptsize$\,\pm\,$1.5} \\
 & \textbf{TLC} ($T_{\mathrm{pos}}=1$) & 66.7{\scriptsize$\,\pm\,$2.8} & 82.9{\scriptsize$\,\pm\,$2.6} & 92.0{\scriptsize$\,\pm\,$2.1} & \textbf{96.2}{\scriptsize$\,\pm\,$1.7} & 97.9{\scriptsize$\,\pm\,$1.3} & 98.7{\scriptsize$\,\pm\,$1.1} & 99.3{\scriptsize$\,\pm\,$0.8} \\
\cmidrule(lr){1-9}
\multirow{4}{*}{MATH} & Autoregressive & 34.3{\scriptsize$\,\pm\,$2.9} & 45.9{\scriptsize$\,\pm\,$3.3} & 56.0{\scriptsize$\,\pm\,$3.6} & 64.7{\scriptsize$\,\pm\,$3.6} & 71.8{\scriptsize$\,\pm\,$3.4} & 77.3{\scriptsize$\,\pm\,$3.4} & 81.4{\scriptsize$\,\pm\,$3.4} \\
 & Low Confidence ($T_{\mathrm{pos}}=0$) & \textbf{42.3}{\scriptsize$\,\pm\,$3.3} & \textbf{52.2}{\scriptsize$\,\pm\,$3.5} & \textbf{60.6}{\scriptsize$\,\pm\,$3.7} & \textbf{67.9}{\scriptsize$\,\pm\,$3.6} & \textbf{73.8}{\scriptsize$\,\pm\,$3.4} & \textbf{78.3}{\scriptsize$\,\pm\,$3.3} & 81.8{\scriptsize$\,\pm\,$3.5} \\
 & Random ($T_{\mathrm{pos}}=\infty$) & 16.0{\scriptsize$\,\pm\,$1.8} & 25.6{\scriptsize$\,\pm\,$2.5} & 37.0{\scriptsize$\,\pm\,$3.1} & 48.6{\scriptsize$\,\pm\,$3.4} & 59.2{\scriptsize$\,\pm\,$3.5} & 68.1{\scriptsize$\,\pm\,$3.7} & 75.4{\scriptsize$\,\pm\,$3.8} \\
 & \textbf{TLC} ($T_{\mathrm{pos}}=1$) & 29.4{\scriptsize$\,\pm\,$2.7} & 41.1{\scriptsize$\,\pm\,$3.2} & 52.2{\scriptsize$\,\pm\,$3.5} & 61.9{\scriptsize$\,\pm\,$3.5} & 70.2{\scriptsize$\,\pm\,$3.4} & 77.0{\scriptsize$\,\pm\,$3.3} & \textbf{82.2}{\scriptsize$\,\pm\,$3.3} \\
\cmidrule(lr){1-9}
\multirow{4}{*}{HumanEval} & Autoregressive & 43.6{\scriptsize$\,\pm\,$4.6} & 59.5{\scriptsize$\,\pm\,$5.3} & \textbf{72.0}{\scriptsize$\,\pm\,$5.4} & \textbf{80.4}{\scriptsize$\,\pm\,$4.9} & \textbf{86.2}{\scriptsize$\,\pm\,$4.6} & \textbf{90.0}{\scriptsize$\,\pm\,$4.2} & \textbf{92.7}{\scriptsize$\,\pm\,$4.0} \\
 & Low Confidence ($T_{\mathrm{pos}}=0$) & \textbf{57.0}{\scriptsize$\,\pm\,$6.3} & \textbf{64.9}{\scriptsize$\,\pm\,$6.2} & 71.2{\scriptsize$\,\pm\,$6.0} & 76.3{\scriptsize$\,\pm\,$5.8} & 80.5{\scriptsize$\,\pm\,$5.5} & 84.0{\scriptsize$\,\pm\,$5.2} & 87.2{\scriptsize$\,\pm\,$5.2} \\
 & Random ($T_{\mathrm{pos}}=\infty$) & 12.4{\scriptsize$\,\pm\,$2.5} & 20.7{\scriptsize$\,\pm\,$3.8} & 31.4{\scriptsize$\,\pm\,$5.0} & 43.2{\scriptsize$\,\pm\,$5.7} & 55.0{\scriptsize$\,\pm\,$6.1} & 66.0{\scriptsize$\,\pm\,$6.4} & 74.4{\scriptsize$\,\pm\,$6.7} \\
 & \textbf{TLC} ($T_{\mathrm{pos}}=1$) & 32.8{\scriptsize$\,\pm\,$4.4} & 46.9{\scriptsize$\,\pm\,$5.4} & 60.0{\scriptsize$\,\pm\,$5.7} & 70.6{\scriptsize$\,\pm\,$5.6} & 78.6{\scriptsize$\,\pm\,$5.5} & 84.2{\scriptsize$\,\pm\,$5.1} & 87.2{\scriptsize$\,\pm\,$5.2} \\
\cmidrule(lr){1-9}
\multirow{4}{*}{MBPP} & Autoregressive & 46.8{\scriptsize$\,\pm\,$3.2} & 58.6{\scriptsize$\,\pm\,$3.4} & \textbf{67.5}{\scriptsize$\,\pm\,$3.4} & \textbf{74.3}{\scriptsize$\,\pm\,$3.3} & \textbf{79.9}{\scriptsize$\,\pm\,$3.1} & \textbf{84.4}{\scriptsize$\,\pm\,$3.0} & 87.6{\scriptsize$\,\pm\,$2.9} \\
 & Low Confidence ($T_{\mathrm{pos}}=0$) & \textbf{54.9}{\scriptsize$\,\pm\,$3.8} & \textbf{60.9}{\scriptsize$\,\pm\,$3.8} & 66.0{\scriptsize$\,\pm\,$3.7} & 70.4{\scriptsize$\,\pm\,$3.7} & 74.0{\scriptsize$\,\pm\,$3.5} & 77.3{\scriptsize$\,\pm\,$3.5} & 80.4{\scriptsize$\,\pm\,$3.6} \\
 & Random ($T_{\mathrm{pos}}=\infty$) & 21.8{\scriptsize$\,\pm\,$2.2} & 32.8{\scriptsize$\,\pm\,$2.8} & 44.7{\scriptsize$\,\pm\,$3.3} & 55.7{\scriptsize$\,\pm\,$3.5} & 64.8{\scriptsize$\,\pm\,$3.6} & 72.2{\scriptsize$\,\pm\,$3.5} & 78.2{\scriptsize$\,\pm\,$3.7} \\
 & \textbf{TLC} ($T_{\mathrm{pos}}=1$) & 40.6{\scriptsize$\,\pm\,$3.0} & 53.1{\scriptsize$\,\pm\,$3.3} & 63.6{\scriptsize$\,\pm\,$3.4} & 71.8{\scriptsize$\,\pm\,$3.3} & 78.4{\scriptsize$\,\pm\,$3.1} & 83.6{\scriptsize$\,\pm\,$3.0} & \textbf{87.8}{\scriptsize$\,\pm\,$2.9} \\
\bottomrule
\end{tabular}}
\end{table}

\newpage

\subsection{TCT (\Cref{fig:llada_tct_all_ds} \& \Cref{fig:dream_tct_all_ds}) }

\begin{table}[h]
\centering
\caption{Pass@$k$ (\%) for \textbf{LLaDA-8B-Instruct} with bootstrapped 95\% confidence intervals
(10{,}000 resamples over questions/tasks, $T_{\mathrm{token}}=0.8$, $\lambda=0.6$).
Values are mean $\pm$ half-width of the 95\% CI. Evaluation sets: GSM8K 300 questions, MATH 500, HumanEval 164 tasks, MBPP 500.
NFE is the average number of function evaluations per sample; for Autoregressive it is the effective generation length (tokens up to EOS), matching the pass@NFE axis of the figures.}
\label{tab:tct-passk-ci-llada}
\resizebox{\textwidth}{!}{%
\begin{tabular}{llrcccccccc}
\toprule
& & & \multicolumn{8}{c}{$k$} \\
\cmidrule(lr){4-11}
Dataset & Method & NFE & 1 & 2 & 4 & 8 & 16 & 32 & 64 & 96 \\
\midrule
\multirow{3}{*}{GSM8K} & Autoregressive & 228 & 75.8{\scriptsize$\,\pm\,$3.1} & 86.7{\scriptsize$\,\pm\,$2.7} & 92.6{\scriptsize$\,\pm\,$2.1} & 96.0{\scriptsize$\,\pm\,$1.6} & 97.8{\scriptsize$\,\pm\,$1.3} & 98.8{\scriptsize$\,\pm\,$1.0} & -- & -- \\
 & Fast-dLLM & 46 & 73.9{\scriptsize$\,\pm\,$3.1} & 85.5{\scriptsize$\,\pm\,$2.9} & 91.2{\scriptsize$\,\pm\,$2.5} & 94.1{\scriptsize$\,\pm\,$2.2} & 96.0{\scriptsize$\,\pm\,$1.9} & 97.1{\scriptsize$\,\pm\,$1.7} & 97.5{\scriptsize$\,\pm\,$1.7} & 97.7{\scriptsize$\,\pm\,$1.7} \\
 & TCT ($T_{\mathrm{pos}}=0.1$) & 44 & 65.7{\scriptsize$\,\pm\,$2.9} & 81.7{\scriptsize$\,\pm\,$2.9} & 90.2{\scriptsize$\,\pm\,$2.4} & 94.5{\scriptsize$\,\pm\,$2.0} & 96.7{\scriptsize$\,\pm\,$1.7} & 97.8{\scriptsize$\,\pm\,$1.5} & 98.2{\scriptsize$\,\pm\,$1.5} & 98.3{\scriptsize$\,\pm\,$1.5} \\
\cmidrule(lr){1-11}
\multirow{3}{*}{MATH} & Autoregressive & 243 & 29.6{\scriptsize$\,\pm\,$3.0} & 39.2{\scriptsize$\,\pm\,$3.4} & 48.1{\scriptsize$\,\pm\,$3.6} & 56.3{\scriptsize$\,\pm\,$3.7} & 64.0{\scriptsize$\,\pm\,$3.6} & 70.8{\scriptsize$\,\pm\,$3.5} & -- & -- \\
 & Fast-dLLM & 61 & 30.3{\scriptsize$\,\pm\,$3.1} & 39.0{\scriptsize$\,\pm\,$3.6} & 46.4{\scriptsize$\,\pm\,$3.8} & 53.4{\scriptsize$\,\pm\,$3.8} & 60.1{\scriptsize$\,\pm\,$3.8} & 66.0{\scriptsize$\,\pm\,$3.8} & 70.6{\scriptsize$\,\pm\,$3.9} & 72.6{\scriptsize$\,\pm\,$3.9} \\
 & TCT ($T_{\mathrm{pos}}=0.1$) & 58 & 25.3{\scriptsize$\,\pm\,$2.7} & 35.4{\scriptsize$\,\pm\,$3.3} & 44.5{\scriptsize$\,\pm\,$3.6} & 53.0{\scriptsize$\,\pm\,$3.7} & 60.8{\scriptsize$\,\pm\,$3.7} & 67.7{\scriptsize$\,\pm\,$3.7} & 73.5{\scriptsize$\,\pm\,$3.7} & 76.6{\scriptsize$\,\pm\,$3.7} \\
\cmidrule(lr){1-11}
\multirow{3}{*}{HumanEval} & Autoregressive & 241 & 26.8{\scriptsize$\,\pm\,$4.5} & 37.8{\scriptsize$\,\pm\,$5.7} & 47.9{\scriptsize$\,\pm\,$6.2} & 56.8{\scriptsize$\,\pm\,$6.3} & 64.6{\scriptsize$\,\pm\,$6.2} & 71.3{\scriptsize$\,\pm\,$6.1} & -- & -- \\
 & Fast-dLLM & 69 & 34.7{\scriptsize$\,\pm\,$6.2} & 41.3{\scriptsize$\,\pm\,$6.7} & 47.0{\scriptsize$\,\pm\,$6.9} & 52.3{\scriptsize$\,\pm\,$7.0} & 57.1{\scriptsize$\,\pm\,$7.0} & 61.1{\scriptsize$\,\pm\,$7.1} & 64.2{\scriptsize$\,\pm\,$7.2} & 65.9{\scriptsize$\,\pm\,$7.0} \\
 & TCT ($T_{\mathrm{pos}}=0.1$) & 62 & 24.0{\scriptsize$\,\pm\,$4.6} & 33.4{\scriptsize$\,\pm\,$5.7} & 42.2{\scriptsize$\,\pm\,$6.4} & 49.8{\scriptsize$\,\pm\,$6.7} & 56.4{\scriptsize$\,\pm\,$6.6} & 62.7{\scriptsize$\,\pm\,$6.6} & 69.2{\scriptsize$\,\pm\,$6.5} & 73.8{\scriptsize$\,\pm\,$6.7} \\
\cmidrule(lr){1-11}
\multirow{3}{*}{MBPP} & Autoregressive & 139 & 30.2{\scriptsize$\,\pm\,$2.9} & 40.2{\scriptsize$\,\pm\,$3.3} & 49.3{\scriptsize$\,\pm\,$3.5} & 57.6{\scriptsize$\,\pm\,$3.6} & 65.1{\scriptsize$\,\pm\,$3.6} & 71.6{\scriptsize$\,\pm\,$3.5} & -- & -- \\
 & Fast-dLLM & 46 & 34.9{\scriptsize$\,\pm\,$3.4} & 42.4{\scriptsize$\,\pm\,$3.8} & 48.4{\scriptsize$\,\pm\,$3.9} & 53.1{\scriptsize$\,\pm\,$4.0} & 57.3{\scriptsize$\,\pm\,$4.0} & 61.3{\scriptsize$\,\pm\,$4.0} & 65.0{\scriptsize$\,\pm\,$4.1} & 66.8{\scriptsize$\,\pm\,$4.1} \\
 & TCT ($T_{\mathrm{pos}}=0.1$) & 43 & 26.0{\scriptsize$\,\pm\,$2.6} & 36.5{\scriptsize$\,\pm\,$3.3} & 46.0{\scriptsize$\,\pm\,$3.6} & 53.9{\scriptsize$\,\pm\,$3.8} & 60.7{\scriptsize$\,\pm\,$3.8} & 66.8{\scriptsize$\,\pm\,$3.7} & 72.5{\scriptsize$\,\pm\,$3.7} & 75.4{\scriptsize$\,\pm\,$3.8} \\
\bottomrule
\end{tabular}}
\end{table}

\begin{table}[h]
\centering
\caption{Pass@$k$ (\%) for \textbf{Dream-v0-Instruct-7B} with bootstrapped 95\% confidence intervals
(10{,}000 resamples over questions/tasks, $T_{\mathrm{token}}=0.8$, $\lambda=0.6$).
Values are mean $\pm$ half-width of the 95\% CI. Fast-dLLM and TCT use 96 samples per question; Autoregressive uses 64--128 (dataset-dependent) and is reported to $k\le32$.
Evaluation sets: GSM8K 300 questions, MATH 500, HumanEval 164 tasks, MBPP 500.
NFE is the average number of function evaluations per sample; for Autoregressive it is the effective generation length (tokens up to EOS), matching the pass@NFE axis of the figures.}
\label{tab:tct-passk-ci-dream}
\resizebox{\textwidth}{!}{%
\begin{tabular}{llrcccccccc}
\toprule
& & & \multicolumn{8}{c}{$k$} \\
\cmidrule(lr){4-11}
Dataset & Method & NFE & 1 & 2 & 4 & 8 & 16 & 32 & 64 & 96 \\
\midrule
\multirow{3}{*}{GSM8K} & Autoregressive & 200 & 64.6{\scriptsize$\,\pm\,$3.3} & 79.1{\scriptsize$\,\pm\,$2.8} & 89.4{\scriptsize$\,\pm\,$2.2} & 95.2{\scriptsize$\,\pm\,$1.6} & 97.8{\scriptsize$\,\pm\,$1.1} & 99.0{\scriptsize$\,\pm\,$0.8} & -- & -- \\
 & Fast-dLLM & 55 & 69.0{\scriptsize$\,\pm\,$3.0} & 83.3{\scriptsize$\,\pm\,$2.8} & 90.9{\scriptsize$\,\pm\,$2.4} & 94.5{\scriptsize$\,\pm\,$2.1} & 96.3{\scriptsize$\,\pm\,$1.8} & 97.3{\scriptsize$\,\pm\,$1.7} & 97.8{\scriptsize$\,\pm\,$1.6} & 98.0{\scriptsize$\,\pm\,$1.5} \\
 & TCT ($T_{\mathrm{pos}}=0.1$) & 51 & 58.2{\scriptsize$\,\pm\,$2.7} & 77.1{\scriptsize$\,\pm\,$2.7} & 89.2{\scriptsize$\,\pm\,$2.2} & 95.0{\scriptsize$\,\pm\,$1.7} & 97.6{\scriptsize$\,\pm\,$1.3} & 98.6{\scriptsize$\,\pm\,$1.1} & 99.1{\scriptsize$\,\pm\,$1.0} & 99.3{\scriptsize$\,\pm\,$0.8} \\
\cmidrule(lr){1-11}
\multirow{3}{*}{MATH} & Autoregressive & 223 & 34.3{\scriptsize$\,\pm\,$2.9} & 45.9{\scriptsize$\,\pm\,$3.4} & 56.0{\scriptsize$\,\pm\,$3.6} & 64.7{\scriptsize$\,\pm\,$3.6} & 71.8{\scriptsize$\,\pm\,$3.5} & 77.3{\scriptsize$\,\pm\,$3.4} & -- & -- \\
 & Fast-dLLM & 66 & 36.4{\scriptsize$\,\pm\,$3.1} & 47.6{\scriptsize$\,\pm\,$3.5} & 57.2{\scriptsize$\,\pm\,$3.6} & 65.3{\scriptsize$\,\pm\,$3.6} & 71.9{\scriptsize$\,\pm\,$3.5} & 77.1{\scriptsize$\,\pm\,$3.4} & 80.7{\scriptsize$\,\pm\,$3.3} & 82.2{\scriptsize$\,\pm\,$3.3} \\
 & TCT ($T_{\mathrm{pos}}=0.1$) & 62 & 28.6{\scriptsize$\,\pm\,$2.5} & 41.0{\scriptsize$\,\pm\,$3.2} & 52.4{\scriptsize$\,\pm\,$3.5} & 61.9{\scriptsize$\,\pm\,$3.6} & 69.6{\scriptsize$\,\pm\,$3.5} & 75.7{\scriptsize$\,\pm\,$3.4} & 80.4{\scriptsize$\,\pm\,$3.3} & 82.6{\scriptsize$\,\pm\,$3.3} \\
\cmidrule(lr){1-11}
\multirow{3}{*}{HumanEval} & Autoregressive & 135 & 43.6{\scriptsize$\,\pm\,$4.6} & 59.6{\scriptsize$\,\pm\,$5.3} & 72.0{\scriptsize$\,\pm\,$5.3} & 80.4{\scriptsize$\,\pm\,$5.0} & 86.0{\scriptsize$\,\pm\,$4.6} & 89.5{\scriptsize$\,\pm\,$4.2} & -- & -- \\
 & Fast-dLLM & 42 & 44.4{\scriptsize$\,\pm\,$5.8} & 54.8{\scriptsize$\,\pm\,$6.1} & 63.6{\scriptsize$\,\pm\,$6.1} & 71.0{\scriptsize$\,\pm\,$5.9} & 76.7{\scriptsize$\,\pm\,$5.8} & 80.8{\scriptsize$\,\pm\,$5.6} & 84.0{\scriptsize$\,\pm\,$5.4} & 86.0{\scriptsize$\,\pm\,$5.2} \\
 & TCT ($T_{\mathrm{pos}}=0.1$) & 39 & 29.9{\scriptsize$\,\pm\,$4.2} & 43.4{\scriptsize$\,\pm\,$5.2} & 56.7{\scriptsize$\,\pm\,$5.7} & 67.8{\scriptsize$\,\pm\,$5.7} & 76.6{\scriptsize$\,\pm\,$5.3} & 83.3{\scriptsize$\,\pm\,$5.0} & 87.5{\scriptsize$\,\pm\,$4.8} & 89.0{\scriptsize$\,\pm\,$4.6} \\
\cmidrule(lr){1-11}
\multirow{3}{*}{MBPP} & Autoregressive & 69 & 46.8{\scriptsize$\,\pm\,$3.2} & 58.6{\scriptsize$\,\pm\,$3.4} & 67.5{\scriptsize$\,\pm\,$3.4} & 74.3{\scriptsize$\,\pm\,$3.2} & 79.9{\scriptsize$\,\pm\,$3.1} & 84.4{\scriptsize$\,\pm\,$2.9} & -- & -- \\
 & Fast-dLLM & 22 & 46.3{\scriptsize$\,\pm\,$3.6} & 54.0{\scriptsize$\,\pm\,$3.7} & 60.6{\scriptsize$\,\pm\,$3.7} & 66.5{\scriptsize$\,\pm\,$3.7} & 71.9{\scriptsize$\,\pm\,$3.5} & 76.5{\scriptsize$\,\pm\,$3.4} & 80.1{\scriptsize$\,\pm\,$3.3} & 81.8{\scriptsize$\,\pm\,$3.4} \\
 & TCT ($T_{\mathrm{pos}}=0.1$) & 23 & 33.5{\scriptsize$\,\pm\,$2.7} & 46.4{\scriptsize$\,\pm\,$3.3} & 57.5{\scriptsize$\,\pm\,$3.5} & 66.1{\scriptsize$\,\pm\,$3.5} & 72.9{\scriptsize$\,\pm\,$3.4} & 78.6{\scriptsize$\,\pm\,$3.2} & 83.4{\scriptsize$\,\pm\,$3.1} & 85.8{\scriptsize$\,\pm\,$3.1} \\
\bottomrule
\end{tabular}}
\end{table}

\end{document}